\documentclass{arxivtmpl}
\hidelogo

\usepackage{amsmath,amsfonts,bm}

\def\eqref#1{equation~\ref{#1}}

\def\1{\bm{1}}

\DeclareMathAlphabet{\mathsfit}{\encodingdefault}{\sfdefault}{m}{sl}
\SetMathAlphabet{\mathsfit}{bold}{\encodingdefault}{\sfdefault}{bx}{n}

\usepackage{hyperref}
\usepackage{url}
\usepackage{amssymb}
\usepackage{booktabs}   
\usepackage{nicematrix} 
\usepackage{multirow}   
\usepackage{graphicx}   
\usepackage[table]{xcolor} 
\usepackage{caption}
\usepackage[most]{tcolorbox}
\usepackage{subcaption}
\usepackage{tabularx}
\usepackage{threeparttable}
\usepackage{pifont}      
\usepackage{algorithm}
\usepackage{algpseudocode}
\usepackage{wrapfig}
\usepackage{needspace}
\usepackage{amsthm}
\makeatletter
\@ifundefined{proposition}{\newtheorem{proposition}{Proposition}}{}
\@ifundefined{corollary}{\newtheorem{corollary}{Corollary}}{}
\theoremstyle{remark}
\@ifundefined{remark}{\newtheorem{remark}{Remark}}{}
\makeatother

\newcommand{\mflogo}{\raisebox{-0.22\height}{\includegraphics[height=1.05em]{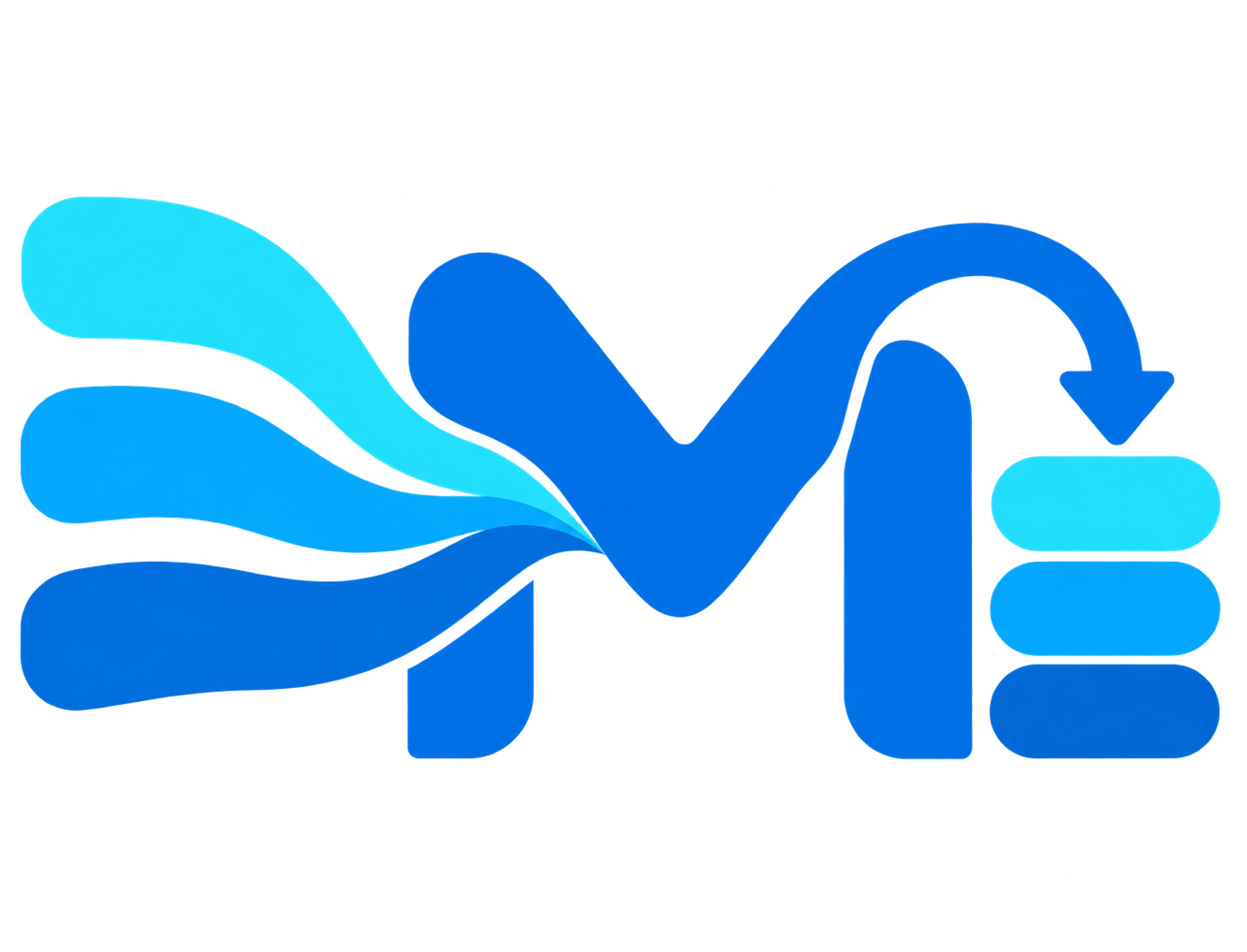}}}
\newcommand{\mflogobig}{\raisebox{-0.26\height}{\includegraphics[height=1.55em]{pic/memflow-logo.png}}}

\usepackage{fontawesome5}
\definecolor{qbg}{HTML}{F4F5FB}
\definecolor{qframe}{HTML}{8A8F98}
\definecolor{qbulb}{HTML}{F2B807}
\newtcolorbox{questionbox}{
  enhanced,
  colback=qbg,
  colframe=qframe,
  boxrule=0.6pt,
  arc=2mm,
  left=2.5mm, right=2.5mm, top=0.9mm, bottom=0.9mm,
  boxsep=0pt,
  before skip=8pt, after skip=8pt,
  fontupper=\small\itshape,
  before upper={\textcolor{qbulb}{\footnotesize\faLightbulb}\hspace{0.5em}}
}

\definecolor{pmA}{HTML}{E91E83}
\definecolor{pmB}{HTML}{4285D4}

\newcommand{\cnum}[1]{\raisebox{-0.4pt}{\ding{#1}}}

\definecolor{casebg}{RGB}{245,245,245}
\definecolor{caseborder}{RGB}{220,220,220}
\definecolor{casegood}{RGB}{24,135,84}
\definecolor{casebad}{RGB}{195,55,65}
\definecolor{casemuted}{RGB}{120,120,120}
\newcommand{\cmark}{\textcolor{casegood}{\ding{51}}}
\newcommand{\xmark}{\textcolor{casebad}{\ding{55}}}

\providecommand{\responseboxheight}{3.4cm}
\newtcolorbox{responsebox}{
  enhanced,
  colback=casebg,
  colframe=caseborder,
  boxrule=0.5pt,
  arc=2.5mm,
  left=2.5mm,
  right=2.5mm,
  top=2.5mm,
  bottom=2.5mm,
  height=\responseboxheight,
  valign=top,
  before upper={\raggedright\setlength{\parindent}{0pt}\setlength{\parskip}{0pt}},
  before skip=0pt,
  after skip=0pt
}

\definecolor{memoryblue}{RGB}{70,105,170}
\definecolor{memorybg}{RGB}{248,250,255}
\definecolor{memorytitle}{RGB}{238,243,253}

\definecolor{memoryblue}{RGB}{70,105,170}
\definecolor{memorybg}{RGB}{248,250,255}
\definecolor{memorytitle}{RGB}{238,243,253}

\newtcolorbox{memorybox}[1]{
  enhanced,
  width=\linewidth,
  colback=memorybg,
  colframe=memoryblue!45,
  colbacktitle=memorytitle,
  coltitle=memoryblue!80!black,
  fonttitle=\bfseries,
  title=#1,
  boxrule=0.7pt,
  arc=3mm,
  left=2mm,
  right=2mm,
  top=1.5mm,
  bottom=1.5mm,
  before skip=6pt,
  after skip=6pt
}

\title{\mflogobig\,\textbf{\texttt{MemFold}}: Learning Compact Soft Memory for Long-Context Personalization via On-Policy Optimization}

\newcommand{\comp}{\mathcal{C}_\phi}
\renewcommand{\responseboxheight}{4.6cm}
\renewcommand\Affilfont{\sffamily\mdseries}
\author[2,*]{Jingxuan Wu}
\author[1,*]{Yuzhe Yang}
\author[3]{Yiqiao Huang}
\author[1]{Chengzhi Liu}
\author[1]{Qingni Wang}
\author[1]{Chengxuan Qian}
\author[4]{Shutong Wu}
\author[4]{Jiawei Zhang}
\author[1]{Xin Eric Wang}
\affil[1]{University of California, Santa Barbara}
\affil[2]{University of North Carolina at Chapel Hill}
\makeatletter
\def\AB@affilsep{\protect\\\protect\Affilfont}
\makeatother
\affil[3]{Harvard University}
\affil[4]{University of Wisconsin--Madison}
\authornote{\textbf{Correspondence:} \href{mailto:jingxwu@unc.edu}{jingxwu@unc.edu}, \href{mailto:yuzheyang@ucsb.edu}{yuzheyang@ucsb.edu}}

\newcommand{\projectlinks}{%
  \href{https://github.com/Johnny221B/memfold}{%
    \adjustbox{valign=c}{\resizebox{!}{12pt}{\textcolor{black}{\faGithub}}}\enspace
    \adjustbox{valign=c}{\textcolor{linkcol}{MemFold}}}%
  \quad
  \href{https://huggingface.co/Johnny221B/memfold}{%
    \adjustbox{valign=c}{\includegraphics[height=12pt]{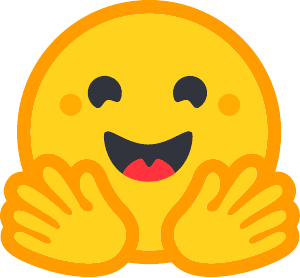}}\enspace
    \adjustbox{valign=c}{\textcolor{linkcol}{MemFold}}}%
  \quad
  \href{https://MemFold.github.io}{%
    \adjustbox{valign=c}{\resizebox{!}{12pt}{\textcolor{black}{\faGlobe}}}\enspace
    \adjustbox{valign=c}{\textcolor{linkcol}{MemFold.github.io}}}%
}

\newtcolorbox{abstractbox}{
  enhanced, breakable, frame hidden, width=\linewidth,
  colback=gblue!6, sharp corners, boxsep=0pt, fontupper=\small,
  left=12pt, right=10pt, top=8pt, bottom=8pt,
  before skip=0pt, after skip=0pt
}

\begin{document}
\begingroup
\setlength{\parskip}{0pt}
\maketitle
\footnotetext[1]{Equal contribution.}
\thispagestyle{firstpagestyle}
\enlargethispage{4\baselineskip}

\nointerlineskip
\vskip\frontmattergap
\begin{abstractbox}
{\sffamily\fontseries{eb}\large\selectfont Abstract}\par

An assistant that serves the same user over a long horizon has to answer from
what that user has revealed: which preferences still hold, which were revised,
and which constraints apply now. Retaining that information is not the same as
acting on it, and the two are usually optimized as if they were. Keeping the
information as text makes the reader's input grow with the retained history,
while compressing it into a fixed number of latent vectors bounds the interface
but is typically trained to reconstruct text or imitate reference answers,
both of which are scored on sequences the reader never produced. We present
\textbf{\texttt{MemFold}}, which optimizes a fixed-budget soft memory by the behavior it
supports. A query-conditioned textual memory is compressed into $K$ continuous
vectors that form the reader's memory interface, and the reader is then trained
on its own rollouts under two complementary signals: group-relative rewards for
task outcomes, and confidence-gated on-policy distillation in which a frozen
textual-memory teacher re-scores the student's sampled tokens under the textual
memory. The teacher is never sampled from, so supervision stays on the student's
current distribution and adds no autoregressive decoding; at inference it is
removed entirely. Across three Qwen backbones, \textbf{\texttt{MemFold}} attains the
highest accuracy we measure on PersonaMem-32K and PersonaMem-128K, with margins
that widen at the longer history length, and transfers to PrefEval and
LongMemEval without target-domain training. Ablations attribute most of the task
gain to the reward term and a smaller additional gain to the teacher signal, and
memory interventions show that the reader depends on the instance-specific
content of its soft memory.

\par\medskip
{\centering\normalsize\Affilfont\projectlinks\par}
\end{abstractbox}
\endgroup

\begin{center}
    \begin{minipage}{\linewidth}
        \centering
        \vspace{-1.5em}
        \includegraphics[width=\linewidth]{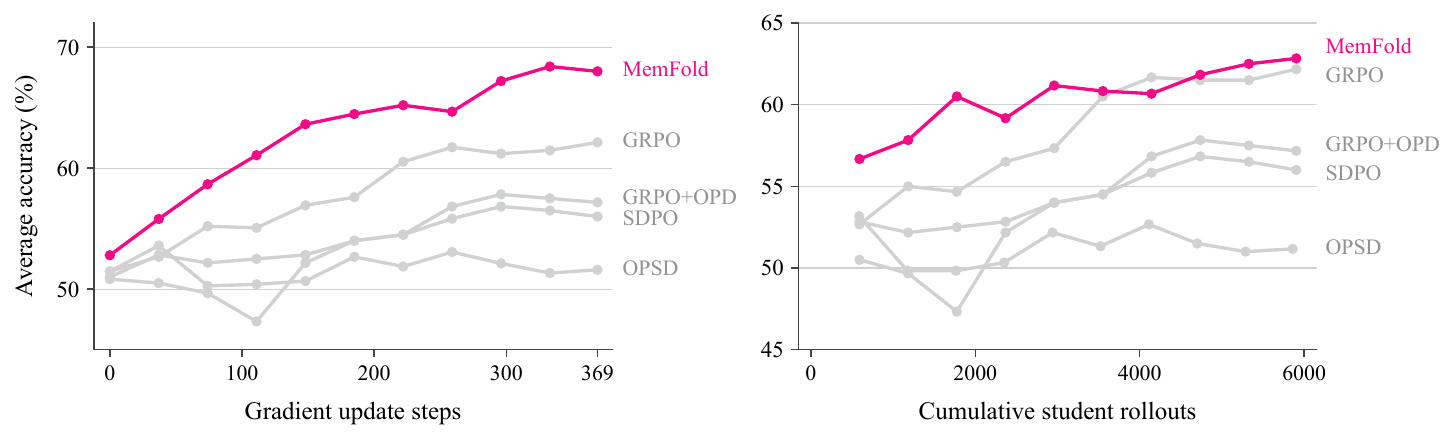}
        \vspace{-1.5em}
        \captionof{figure}{Using Qwen2.5-3B-Instruct on PersonaMem-32K, \textbf{\texttt{MemFold}} improves test accuracy more rapidly during training and achieves higher final performance than the baselines (left), while reaching comparable test accuracy with fewer student rollouts (right), demonstrating both effective optimization and greater sample efficiency. Experimental settings are provided in Appendix~\ref{app:efficiency_comparison}.}
        \label{fig:exp}
    \end{minipage}
\end{center}

\clearpage
\section{Introduction}

A long-running assistant is asked similar questions by different people, and the
suitable answer is not the same for each of them. What makes a response suitable
is the set of facts, preferences, and constraints that this particular user has
revealed in earlier interactions. The same holds for one user across time: a
request phrased identically in March and in September can call for different
advice because what the user wants has changed in between. Personalization in
this setting is therefore a property of the response rather than of the store.
It is the extent to which an answer reflects the information about this user
that is in force at the moment the answer is produced.

That last qualification does much of the work. A long history mixes
enduring preferences with one-off activities, constraints that held only for a
period, explicit revisions, and the reasons behind those revisions. Recency does
not resolve them: the most recent mention of an activity is not necessarily a new
preference, and an early statement is not necessarily stale. On benchmarks built
around evolving user profiles, models recover the course of a preference change
and the reasons behind it more reliably than they let the resulting preference
constrain a concrete recommendation \citep{jiang2025know}, and they continue to
recommend options a user has objected to when the objection was expressed
implicitly rather than as an instruction
\citep{zhao2025llmsrecognizepreferencesevaluating, guo2026realpref}. Evaluations
that score forgetting alongside recall find agents still acting on information a
later turn has invalidated \citep{uddin2026memora}, and condensed histories can
retain the facts of an episode while losing the preference signal that made it
matter \citep{wang2026finperma}. Retaining the evidence and acting on it are
thus separate requirements, and a memory mechanism has to be judged against
both.

Two representations carry that evidence forward. Textual memory keeps it
as text, which is readable and editable, but its length varies with the amount
retained and it is re-encoded at every turn \citep{packer2023memgpt,
zhong2024memorybank, chhikara2025mem0, xu2025amem, luo2026memsif,
liao2026leanmem}. Latent memory replaces the text with a fixed number of
continuous vectors, giving the reader an interface whose size is independent of
how long the interaction has been \citep{mu2023gist, chevalier2023adapting,
ge2024icae, cheng2024xrag, li2026latentcompile, zhou2026latentpress}. That fixed
budget is a property of the interface, and it does not by itself determine
whether the compressed representation still supports the behavior the text did.

The gap this leaves is one of optimization rather than representation.
Compressors are typically trained to reconstruct the source text
\citep{ge2024icae, zhang2026nextmem} or to align with reference answers under a
frozen decoder \citep{rahman2026cmc}, and both objectives score the model on
sequences it did not produce, so neither reaches the errors the reader makes
once the text is gone. Recent work closes part of this loop by treating memory
as a resource a policy learns to use, optimizing what a memory stores,
retrieves, or spends its budget on against a task reward \citep{fu2026latentmem,
feng2026elasticmem, yu2026agemem, yu2026lazymem, ye2026coevomem}. A scalar
outcome, however, reports only that a response was wrong; it does not indicate
which decisions in it failed to use what the memory held. A
characteristic failure of this kind is a response that is fluent and topically
appropriate, with the preference violation confined to one recommendation among
several, which we examine in Section~\ref{sec:discussion}. A sequence-level
reward charges the entire response for that span.

Our starting point is to judge a compact memory not by whether it returns
the text, but by where the reader, given only the compact memory,
under-uses what the text would support. This can be measured directly,
because the textual and compressed memories can be evaluated on the same
response. Given a response sampled by a reader conditioned on soft memory,
a frozen copy of that reader conditioned on the corresponding textual
memory can score the identical token sequence. The resulting token-level
log-probability differences locate where the reader is less confident
under compression than under the text, on trajectories the reader actually
visits. We use them to weight the reader's own tokens rather than as a
target to imitate, so the signal complements the task reward: the reward
indicates whether a response is correct, while the comparison indicates
which of its tokens the textual reading supports more strongly.

Based on this observation, our contributions are threefold:
\cnum{182}~Prior compressors are trained on sequences the reader never produces; we propose \textbf{\texttt{MemFold}}, which optimizes a fixed-budget soft-memory interface directly by the downstream behavior it supports, on the reader's own rollouts.
\ding{183}~We introduce a confidence-gated on-policy distillation objective
that re-scores the student's own generations under textual memory; in
expectation it acts as a reverse KL toward the textual-memory reader with
bounded per-token influence, adding a dense signal to the sequence-level
reward without letting the teacher dominate it.
\cnum{184}~We show that \textbf{\texttt{MemFold}} achieves the highest accuracy on all four benchmarks across three backbones, with margins that widen at longer history lengths, and transfers to unseen benchmarks without target-domain training.

\section{Related Work}

\textbf{Memory and personalization in long-horizon interaction.}
Long-horizon assistants are commonly given an explicit textual store, written
across sessions and queried at inference \citep{lewis2020retrieval,
packer2023memgpt, zhong2024memorybank, xu2025amem, chhikara2025mem0,
fang2026memp, luo2026memsif, liao2026leanmem}, with benchmarks measuring the
recall and multi-session reasoning that results \citep{maharana2024evaluating,
wu2024longmemeval, chang2026locomoconv}. A second line asks whether retained
information changes what the model says to a given user, through personalized
generation from user profiles \citep{salemi2024lamp, salemi2025lamp,
zhang2026memorycd, in2026permembench}, histories in which preferences develop
over time \citep{jiang2025know}, preferences expressed implicitly rather than as
instructions \citep{zhao2025llmsrecognizepreferencesevaluating,
guo2026realpref}, and whether information a later turn has invalidated is
dropped as well as recalled \citep{uddin2026memora}. We adopt the distinction
those benchmarks draw: retrieval accuracy over a history and
preference-consistent behavior in a response are different quantities. Because
the content stays in text, the tokens the reader processes also grow with the
amount retained, and the store is optimized separately from the model consuming
it.

\textbf{Compact memory and behavioral optimization.}
A complementary line shortens the representation, by dropping or rewriting
tokens \citep{jiang2023llmlingua, li2024promptcompression} or, building on prompt
and prefix tuning \citep{lester2021power, li2021prefix}, by encoding context into
continuous vectors read in the embedding space \citep{mu2023gist,
chevalier2023adapting, ge2024icae, cheng2024xrag, zhang2026memgen,
zhang2026nextmem, li2026latentcompile, zhou2026latentpress}, often through
latent-query resamplers \citep{jaegle2022perceiver, alayrac2022flamingo} as ours
is. Their objectives, reconstruction or supervised imitation, are scored on
sequences the reader did not generate, as is distillation that aligns a
compressed context with reference answers under a frozen decoder
\citep{rahman2026cmc}. Reinforcement learning instead trains on
the model's own samples \citep{ouyang2022instructgpt, shao2024deepseekmath,
guo2025deepseekr1} and has been used to decide what a memory stores, retrieves,
or spends its budget on \citep{yan2025memoryr1, wang2025mema, yu2026agemem,
yu2026lazymem, ye2026coevomem, song2026realm, fu2026latentmem,
feng2026elasticmem}, whereas we hold the interface fixed and optimize how it is read;
on-policy distillation adds the per-token signal a scalar reward lacks
\citep{agarwal2024gkd,gu2024minillm,zhao2026self} but uses a stronger
teacher and a signed gap that pulls the student toward it. Ours is closer
to context distillation \citep{snell2022learning}: a frozen copy of the
student's own initialization, never sampled from, re-scores the student's
tokens under the textual memory it has had compressed away, and a
non-negative gate replaces the signed gap. The two differ only in their
memory, which ties the signal to compression rather than to a more
accurate solver.

\section{Methodology}
\label{sec:method}

\textbf{\texttt{MemFold}} has two parts: a fixed-budget memory interface and an
on-policy procedure that optimizes how the reader uses it. A memory writer turns
the history visible at query time into query-relevant textual memory, a
compressor maps that memory into $K$ soft vectors, and the reader is then
trained on its own rollouts under a task reward together with a frozen
textual-memory teacher that re-scores those same rollouts
(Figure~\ref{fig:method}). Supervised training is used only to initialize the
interface; its data construction and training details are deferred to
Appendix~\ref{app:memory_extraction} and Appendix~\ref{app:stage2}.

\begin{figure}[h]

    \centering
    \includegraphics[width=1\linewidth]{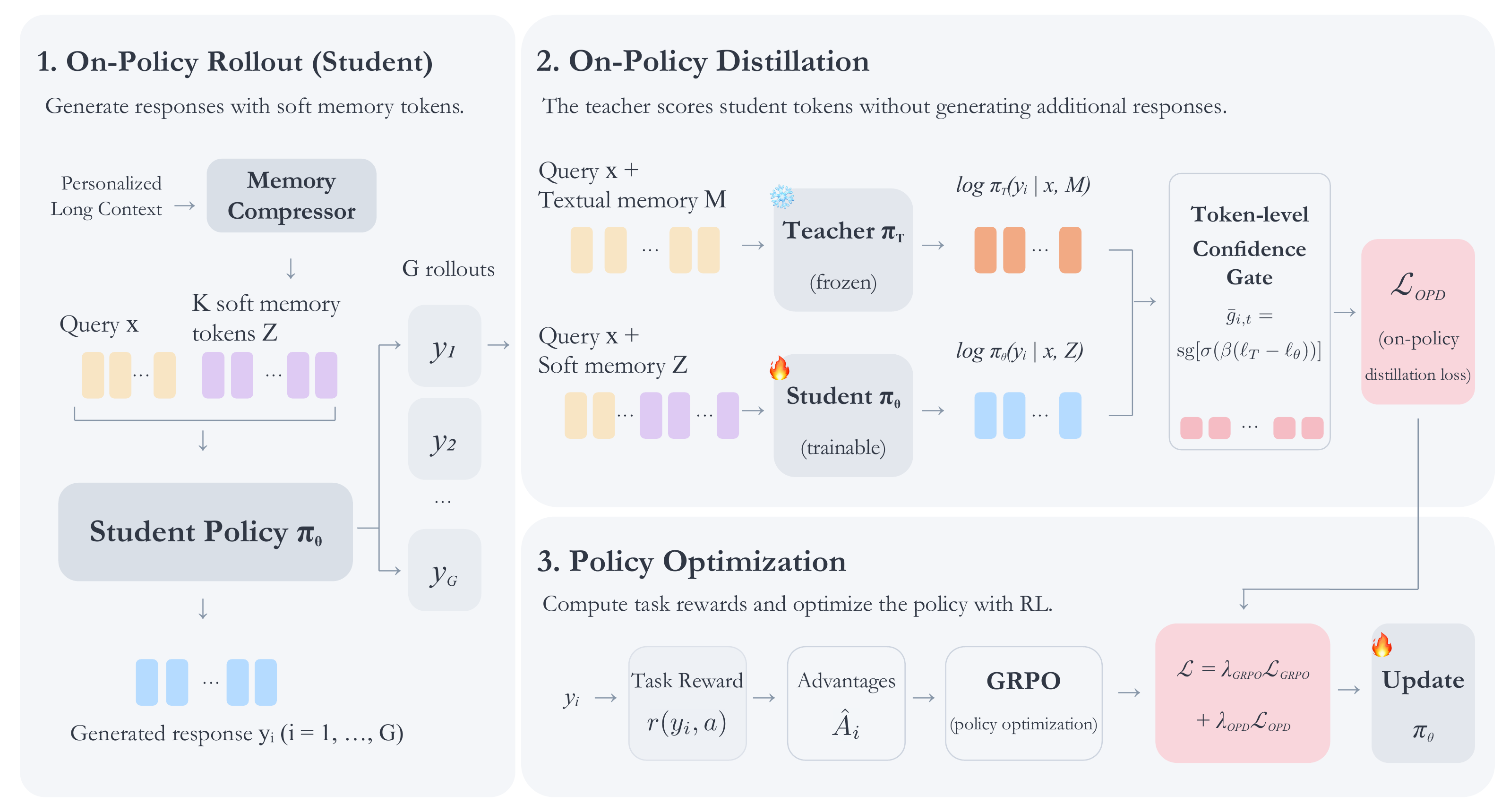}
    \caption{\ding{182} \textbf{On-policy rollout:} The student generates
    responses conditioned on the query and $K$ compressed soft memory tokens.
    \ding{183} \textbf{On-policy distillation:} A frozen textual-memory teacher
    scores the student's tokens, and a detached teacher--student confidence
    gate $\bar g_{i,t}$ weights the student's own tokens.
    \ding{184} \textbf{Policy optimization:} GRPO task rewards and the gated
    distillation term jointly update the student, while the teacher and
    compressor remain frozen.}
    \label{fig:method}
\end{figure}

\subsection{Problem Formulation}
\label{sec:formulation}

A query $x$ arrives at time $\tau$, and the model may condition only on the interaction history $C$ visible before $\tau$. Because no annotation indicates which statements in $C$ remain valid, the model must infer them from the history. All components share a frozen backbone $\theta_0$ and a single LoRA adapter $\theta$. The adapter first produces a query-conditioned textual memory $M=e_{\theta}(C,x)$ containing the relevant evidence, temporal relations, and derived facts. A compressor $\comp$ then maps it to a fixed-size continuous memory
$Z=\comp(E(M))\in\mathbb{R}^{K\times d}$, where $E$ comprises the first four Transformer blocks of $\theta_0$, $K$ is the memory budget, and $d$ is the reader's input embedding dimension.

The same adapter, acting as the reader, generates $y\sim\pi_\theta(\cdot\mid x,Z)$. During on-policy training, a frozen copy of the initialized reader,
$\pi_T(\cdot\mid x,M):=\pi_{\theta_{\mathrm{init}}}(\cdot\mid x,M)$, instead reads the textual memory and serves as a behavioral reference. Teacher and student thus initially share all weights and differ only in their memory inputs. Both score each student-sampled trajectory, and their confidence gap identifies tokens for which the compressed-memory reader is less confident. We use this gap to weight the student's own tokens alongside the task reward,
bounding the teacher's influence on each token rather than matching its
distribution. At inference, only $\pi_\theta(\cdot\mid x,Z)$ is retained, so the memory interface remains fixed at $K$ vectors regardless of the history length.

\subsection{Fixed-Budget Soft Memory}
This component supplies the interface that Section~\ref{sec:on_policy_optimization} acts
on. Two properties are what the method requires of it: its size does not
depend on the length of $C$, and the reader can read it. We instantiate
these requirements with a standard Perceiver-style compressor.

\textbf{Textual memory.} We first transform the long context $C$ into a
structured, query-relevant textual memory $M$, filtering irrelevant history
before compression. The adapter is first trained on memories extracted by
an external model from the history and query alone, and thereafter writes
$M$ itself, so the external model is not needed at inference.

\textbf{Fixed-budget compression.} We encode $M$ with the frozen encoder
$E$, the first four Transformer blocks of the backbone, and a lightweight Perceiver-style compressor
aggregates the variable-length sequence with $K$ learned latent queries $Q_K$:
\begin{equation*}
Z = \mathcal{C}_\phi\big(E(M)\big)
  = P_\phi\big(\mathrm{Comp}_\phi(Q_K, E(M))\big) \in \mathbb{R}^{K\times d},
\end{equation*}
where $P_\phi$ projects the compressed states into the reader's input
embedding space. The reader therefore always receives exactly $K$ memory
vectors, regardless of the lengths of $C$ and $M$.

\textbf{Initialization.} We first pretrain the compressor on raw history
prefixes $h$: the frozen backbone must reconstruct the textual memory from
$Z_h = \mathcal{C}_\phi(E(h))$,
\begin{equation}
\mathcal{L}_{\mathrm{rec}} = -\frac{1}{|M|}\sum_{t=1}^{|M|}
\log p_{\theta_0}\!\left(m_t \mid x, Z_h, m_{<t}\right),
\label{eq:rec}
\end{equation}
then adapt it to textual-memory inputs, $Z = \mathcal{C}_\phi(E(M))$, and initialize the reader on answer generation from $(x, Z)$. These objectives are scored on sequences the reader did not produce, so they make the interface readable without directly optimizing how the reader uses it on its own generations. On-policy optimization addresses this remaining gap.

\begin{algorithm}[h]
\caption{\textbf{\texttt{MemFold}} on-policy memory optimization}
\label{alg:memflow}
\begin{algorithmic}[1]
\Require Reader-initialized adapter $\theta_{\mathrm{init}}$,
compressor $C_\phi$, frozen teacher $\pi_T$; training examples
$(C,x,a)$ with reference answer $a$ and cached on-policy memories
$M^{\mathrm{init}}=e_{\theta_{\mathrm{init}}}(C,x)$

\Ensure Trained adapter $\theta$

\State Initialize $\theta \leftarrow \theta_{\mathrm{init}}$;
freeze compressor, projector, and teacher

\For{each training minibatch of $(C,x,a)$}
    \State Load cached $M^{\mathrm{init}}$ and compute
    $Z \leftarrow C_\phi(E(M^{\mathrm{init}}))$
    \State Snapshot rollout policy
    $\pi_{\mathrm{old}}\leftarrow\operatorname{sg}[\pi_\theta]$
    \State Sample
    $y_1,\ldots,y_G\sim\pi_{\mathrm{old}}(\cdot\mid x,Z)$
    \State Score rewards $r_i=r(y_i,a)$ and compute $\hat A_i$
    \For{each sampled response $i$ and unmasked token $t$}
        \State $\ell_{\theta,i,t}\leftarrow
        \log\pi_\theta(y_{i,t}\mid x,Z,y_{i,<t})$
        \State $\ell_{T,i,t}\leftarrow
        \log\pi_T(y_{i,t}\mid x,M^{\mathrm{init}},y_{i,<t})$
        \State $\bar g_{i,t}\leftarrow
        \operatorname{sg}\!\left[
        \sigma\!\left(\beta(\ell_{T,i,t}-\ell_{\theta,i,t})\right)
        \right]$
    \EndFor
    \State Compute $\mathcal{L}_{\mathrm{GRPO}}$ and
    $\mathcal{L}_{\mathrm{OPD}}$ over shared response masks
    \State Update the LoRA adapter $\theta$
\EndFor

\State \Return $\theta$; at inference, write
$M=e_\theta(C,x)$ and answer from
$(x,C_\phi(E(M)))$ without the teacher
\end{algorithmic}
\end{algorithm}

\subsection{On-Policy Memory Optimization}
\label{sec:on_policy_optimization}

Initialization leaves two gaps. First, a reader trained on reference answers
receives no signal about errors in its own generations. Second, a task reward
does score the reader's own samples, but one scalar per response does not
identify which tokens failed to use what the textual memory held.
\textbf{\texttt{MemFold}} closes the first by optimizing on the student's
rollouts, and the second by re-scoring those same rollouts under the textual
memory. Algorithm~\ref{alg:memflow} summarizes the resulting loop.

\textbf{On-policy rollouts.}
For each query $x$, the current soft-memory policy samples a group of responses
$\{y_i\}_{i=1}^{G}$:
\(
y_i\sim\pi_{\theta}(\cdot\mid x,Z).
\)
Each response receives a task reward $r_i$, from which we compute the
group-relative advantage
\(
\hat{A}_i=\frac{r_i-\mu_r}{\sigma_r+\epsilon}.
\)
Let
\(
\rho_{i,t}(\theta)=
\frac{
\pi_{\theta}(y_{i,t}\mid x,Z,y_{i,<t})
}{
\pi_{\mathrm{old}}(y_{i,t}\mid x,Z,y_{i,<t})
}.
\)
The group-relative policy objective is
\begin{equation}
\mathcal{L}_{\mathrm{GRPO}}
=
-\mathbb{E}\!\left[
\frac{1}{G}\sum_{i=1}^{G}\frac{1}{|y_i|}
\sum_{t=1}^{|y_i|}
\min\!\left(
\rho_{i,t}\hat{A}_i,\,
\operatorname{clip}(\rho_{i,t},1-\epsilon,1+\epsilon)\hat{A}_i
\right)
\right].
\label{eq:grpo}
\end{equation}

\textbf{On-policy memory distillation.} GRPO evaluates complete
responses but does not reveal where the soft-memory reader under-uses
what the memory contains. The textual-memory teacher $\pi_T$ offers a
reference for how the same backbone reads the uncompressed memory: it
scores each token sampled by the student,
\begin{equation}
\ell_{T,i,t} = \log \pi_T(y_{i,t}\mid x, M, y_{i,<t}), \qquad
\ell_{\theta,i,t} = \log \pi_\theta(y_{i,t}\mid x, Z, y_{i,<t}),
\end{equation}
and we weight the student's own tokens by a detached gate
$\bar g_{i,t} = \mathrm{sg}\!\left[\sigma\!\left(\beta(\ell_{T,i,t}-\ell_{\theta,i,t})\right)\right]$:
\begin{equation}
\mathcal{L}_{\mathrm{OPD}} = -\,\mathbb{E}\Big[\frac{1}{G}\sum_{i=1}^{G}
\frac{1}{|y_i|}\sum_{t=1}^{|y_i|} \bar g_{i,t}\,\ell_{\theta,i,t}\Big].
\label{eq:opd}
\end{equation}
The objective strengthens the student's sampled tokens in proportion to
how much more the textual-memory reading supports them than the
soft-memory reading does. Per sample, the gate is largest where the teacher is more confident and approaches zero where it is less confident than the student. The objective never promotes tokens the student did not sample, so it re-ranks the student's
own candidates by the teacher's relative confidence rather than importing the
teacher's own choices; in expectation, this re-ranking acts as a reverse KL
with a bounded per-token coefficient. The bound is
deliberate: $\pi_T$ is a frozen reader of the textual memory, not an accuracy
oracle, so no single token on which it is strongly over- or under-confident can
dominate the update. Signed feedback on task outcomes comes from GRPO.

\textbf{Joint objective and training.}
The two signals address different gaps and are combined as
\[
\mathcal{L}_{\mathrm{MemFold}}
= \lambda_{\mathrm{GRPO}}\mathcal{L}_{\mathrm{GRPO}}
+ \lambda_{\mathrm{OPD}}\mathcal{L}_{\mathrm{OPD}},
\]
with the textual-memory teacher and compressor frozen throughout and no
additional KL regularization in the final configuration.
Because the gate and teacher log-probability are detached, $\mathcal{L}_{\mathrm{OPD}}$
reduces to a gate-weighted likelihood on the student's own samples with no
gradient through the teacher branch; its expected update vanishes when the soft
and textual readings agree and weakens as they converge
(Appendix~\ref{app:theory}).
At inference time, the teacher and all distillation computations are removed,
and generation conditions only on $(x, Z)$; additional implementation details
are in Appendix~\ref{app:stage3}.

\section{Experiments}

\subsection{Experimental Setup}

\textbf{Datasets \& Benchmarks.}
We evaluate on several benchmarks with complementary focuses.
PersonaMem-32K and PersonaMem-128K \citep{jiang2025know} evaluate dynamic user preference tracking under increasingly long conversational histories.
We train on PersonaMem-32K and evaluate directly on the implicit persona subset of PrefEval \citep{zhao2025llmsrecognizepreferencesevaluating} to test cross-dataset personalization generalization.
We also train on LoCoMo \citep{maharana2024evaluating} and evaluate directly on LongMemEval \citep{wu2024longmemeval} without task-specific training to assess generalization to broader long-term memory reasoning tasks.

\textbf{Backbone Models.}
We evaluate our method with three backbone models:
Qwen2.5-3B-Instruct and Qwen2.5-7B-Instruct~\citep{qwen2025qwen25technicalreport},
as well as Qwen3-4B~\citep{yang2025qwen3}.

\textbf{Baselines.}
We organize the baselines into three categories.
For training objectives, we compare with
GRPO~\citep{shao2024deepseekmath} and OPSD~\citep{zhao2026self}.
For latent-memory and context-compression methods, we include
AutoCompressor~\citep{chevalier2023adapting},
MemGen~\citep{zhang2026memgen}, and
xRAG~\citep{cheng2024xrag}. These methods compress textual context into
compact continuous representations for downstream inference.
Finally, Full Text serves as the uncompressed-context baseline, directly
providing the complete textual history to the backbone model.

\textbf{Evaluation.}

We report accuracy (Acc.) and average end-to-end token-equivalent count (\#Tok.) per test instance; accounting rules are in Appendix~\ref{app:token_accounting}. PrefEval uses PersonaMem-32K checkpoints; predictions on PrefEval and LongMemEval are scored via the Gemini-3.8-flash API~\citep{gemini38flash2026} with low thinking, with LongMemEval additionally using structured yes/no outputs. Greedy decoding ($T=0$) is used by default; ablations report mean accuracy and Pass@16 over 16 samples ($T=1.0$, top-$p=0.98$).


\begin{table*}[!ht]
\centering
\definecolor{tableblue}{HTML}{4285F4}
\definecolor{tableheader}{HTML}{F2F2F2}
\colorlet{tablebest}{tableblue!55}
\colorlet{tablesecond}{tableblue!22}
\caption{Main results for in-domain and direct cross-dataset evaluation,
reporting accuracy and end-to-end token-equivalent counts.
Weighted averages use benchmark sample counts. {\setlength{\fboxsep}{0pt}\protect\colorbox{tablebest}{\kern2pt Best\kern2pt}} and {\setlength{\fboxsep}{0pt}\protect\colorbox{tablesecond}{\kern2pt second-best\kern2pt}} accuracies are highlighted for each backbone and benchmark.}

\label{tab:main_tab}
\small
\setlength{\tabcolsep}{3.5pt}
\renewcommand{\arraystretch}{1.16}
\setlength{\aboverulesep}{0pt}
\setlength{\belowrulesep}{0pt}

\resizebox{\textwidth}{!}{%
\begin{NiceTabular}{llrr rr rr rr rr}
\CodeBefore
\tikz \fill [tableheader] (1-|1) rectangle (4-|13);
\Body
\toprule
\multirow{3}{*}{\textbf{Model}} &
\multirow{3}{*}{\textbf{Method}} &
\multicolumn{4}{c}{\textbf{In-Domain}} &
\multicolumn{4}{c}{\textbf{Out-of-Domain}} &
\multicolumn{2}{c}{} \\
\cmidrule(lr){3-6}\cmidrule(lr){7-10}
& &
\multicolumn{2}{c}{\textbf{PersonaMem-32K}} &
\multicolumn{2}{c}{\textbf{PersonaMem-128K}} &
\multicolumn{2}{c}{\textbf{PrefEval}} &
\multicolumn{2}{c}{\textbf{LongMemEval}} &
\multicolumn{2}{c}{\textbf{Weighted Avg.}} \\
\cmidrule(lr){3-4}\cmidrule(lr){5-6}
\cmidrule(lr){7-8}\cmidrule(lr){9-10}
& &
Acc.$\uparrow$ & \#Tok.$\downarrow$ &
Acc.$\uparrow$ & \#Tok.$\downarrow$ &
Acc.$\uparrow$ & \#Tok.$\downarrow$ &
Acc.$\uparrow$ & \#Tok.$\downarrow$ &
Acc.$\uparrow$ & \#Tok.$\downarrow$ \\
\midrule

\multirow{7}{*}[-1.0ex]{%
  \rotatebox[origin=c]{90}{%
    \shortstack{Qwen2.5-3B\\Instruct}}}
& Full Text
& 46.0 & 24,518
& 21.9 & 124,317
& 12.9 & 1,893
& \cellcolor{tablesecond}26.6 & 145,521
& 18.8 & 58,803 \\


& xRAG
& 36.0 & 28,714
& 55.8 & 121,118
& 8.5 & 1,693
& 10.2 & 115,744
& 15.9 & 50,040 \\

& AutoCompressor$^{\dagger}$
& 32.0 & 25,006
& 30.5 & 123,999
& -- & --
& -- & --
& -- & -- \\

& MemGen
& 54.0 & 49,510
& \cellcolor{tablesecond}66.1 & 123,302
& \cellcolor{tablesecond}13.3 & 19,386
& 3.8 & 208,378
& 18.7 & 86,809 \\

& GRPO
& \cellcolor{tablesecond}68.0 & 27,297
& 58.4 &  123,410
& 11.3 & 1,893
& 26.0 & 145,521
& \cellcolor{tablesecond}23.2 & 58,762 \\

& OPSD
& 54.0 & 27,297
& 29.6 & 123,410
& 12.8 & 1,893
& 26.2 & 145,521
& 19.9 & 58,762 \\

& \mflogo\,\textbf{\texttt{MemFold}}
& \cellcolor{tablebest}\textbf{70.0} & 24,395
& \cellcolor{tablebest}\textbf{88.4} & 123,229
& \cellcolor{tablebest}\textbf{19.9} & 2,692
& \cellcolor{tablebest}\textbf{32.4} & 122,545
& \cellcolor{tablebest}\textbf{33.8} & 52,662 \\

\midrule

\multirow{7}{*}[-1.0ex]{%
  \rotatebox[origin=c]{90}{%
    \shortstack{Qwen2.5-7B\\Instruct}}}
& Full Text
& 60.0 & 24,517
& 24.0 & 124,317
& \cellcolor{tablesecond}14.0 & 1,893
& \cellcolor{tablesecond}25.4 & 145,532
& 19.8 & 58,806 \\


& xRAG
& 62.0 & 28,733
& 64.9 & 121,118
& 10.9 & 1,703
& 12.2 & 115,754
& 19.8 & 50,049 \\

& AutoCompressor$^{\dagger}$
& 66.0 & 25,506
& 30.7 & 123,999
& -- & --
& 9.6 & 109,030
& -- & -- \\

& MemGen
& \cellcolor{tablesecond}76.0 & 49,510
& \cellcolor{tablesecond}78.5 & 123,302
& \cellcolor{tablebest}14.1 & 20,002
& 11.0 & 178,477
& 23.4 & 78,769 \\

& GRPO
& 70.0 & 27,297
& 62.2 & 123,411
& \cellcolor{tablebest}14.1 & 1,893
& 25.0 & 145,535
& \cellcolor{tablesecond}25.0 & 58,766 \\

& OPSD
& 64.0 & 27,297
& 47.2 & 123,420
& 13.2 & 1,895
& 25.0 & 145,542
& 22.4 & 58,770 \\

& \mflogo\,\textbf{\texttt{MemFold}}
& \cellcolor{tablebest}\textbf{88.0} & 24,851
& \cellcolor{tablebest}\textbf{94.4} & 124,640
& \cellcolor{tablebest}14.1 & 2,192
& \cellcolor{tablebest}\textbf{36.8} & 122,089
& \cellcolor{tablebest}\textbf{33.0} & 52,451 \\

\midrule

\multirow{7}{*}[-1.0ex]{%
  \rotatebox[origin=c]{90}{Qwen3-4B}}
& Full Text
& 56.0 & 24,500
& 4.3 & 124,301
& 13.6 & 1,894
& \cellcolor{tablesecond}27.8 & 145,506
& 17.6 & 58,796 \\


& xRAG
& 66.0 & 28,934
& 65.2 & 121,146
& 2.8 & 1,827
& 6.6 & 115,925
& 13.8 & 50,176 \\

& AutoCompressor$^{\dagger}$
& 58.0 & 25,519
& 30.8 & 123,982
& -- & --
& 14.2 & 107,087
& -- & -- \\

& MemGen
& 56.0 & 49,510
& \cellcolor{tablesecond}65.7 & 124,302
& 12.3 & 19,717
& 10.4 & 112,881
& 20.0 & 60,345 \\

& GRPO
& 62.0 & 24,500
& \cellcolor{tablesecond}65.7 & 123,393
& \cellcolor{tablesecond}13.8 & 1,895
& 27.4 & 145,527
& \cellcolor{tablesecond}25.7 & 58,684 \\

& OPSD
& \cellcolor{tablesecond}74.0 & 24,500
& 39.5 & 123,441
& \cellcolor{tablesecond}13.8 & 1,895
& 26.0 & 145,527
& 22.3 & 58,691 \\

& \mflogo\,\textbf{\texttt{MemFold}}
& \cellcolor{tablebest}\textbf{84.0} & 24,395
& \cellcolor{tablebest}\textbf{89.4} & 124,155
& \cellcolor{tablebest}\textbf{15.2} & 2,268
& \cellcolor{tablebest}\textbf{38.6} & 123,488
& \cellcolor{tablebest}\textbf{33.4} & 52,810 \\

\bottomrule
\end{NiceTabular}%
}

\par

\begin{minipage}{\textwidth}
\footnotesize
\raggedright
\setlength{\parindent}{0pt}
\hangindent=1.2em
\hangafter=1
\textsuperscript{\(\dagger\)}\,
AutoCompressor yields no valid outputs in these settings: its
PersonaMem-32K-trained checkpoints return a single prediction on PrefEval, while the 3B checkpoint produces malformed LongMemEval outputs.
\end{minipage}

\end{table*}

\subsection{Results}

Table~\ref{tab:main_tab} reports accuracy and average end-to-end tokens per test instance for every method under a shared backbone and evaluation protocol. \textbf{\texttt{MemFold}} obtains the highest accuracy on every benchmark and backbone, tying the strongest baseline in one cell on PrefEval. Two patterns in the table matter more than the individual entries. First, the advantage widens with history length: the margin over the best competing method is larger at PersonaMem-128K than at 32K for all three backbones, and several baselines that are competitive at 32K fall sharply at 128K, whereas \textbf{\texttt{MemFold}} does not, even though its reader is given the same $K$ vectors in both settings and only the history behind them grows. Second, the accuracy does not come from spending more at inference: average end-to-end cost stays close to full-context inference and is lower in aggregate for all three backbones. The consistent exception is PrefEval, where the history is too short for reader-side savings to offset memory-construction costs. xRAG uses slightly fewer tokens, but at a substantial accuracy cost. What the table cannot show is where the gain originates, or whether the reader is using the memory at all; we take those up in Sections~\ref{sec:ablation} and~\ref{sec:discussion}, respectively.

\section{Ablation Study}
\label{sec:ablation}

Table~\ref{tab:component_ablation} isolates the contributions of
memory-interface initialization and the two on-policy objectives. Reader initialization produces the largest initialization gain, indicating that the model must first learn to consume the soft-memory interface before on-policy optimization can be effective. Writer initialization provides an additional gain by improving the textual memory from which the soft representation is constructed. Among the on-policy objectives, GRPO accounts for most of the task improvement, while OPD provides a complementary gain through token-level guidance from the textual-memory teacher when combined with GRPO. Finally, the text-space control performs comparably but does not outperform the full soft-memory model, suggesting that fixed-budget compression is not the primary bottleneck under this training configuration.

\begin{table*}[!ht]
\centering
\caption{\small
Ablation of initialization and on-policy objectives. The text-space control uses textual memory, whereas \textbf{\texttt{MemFold}} uses fixed-budget soft memory.
\texorpdfstring{\cmark}{full} = satisfies,
\texorpdfstring{\xmark}{absent} = does not satisfy,
and \textit{N/A} = not applicable.
}

\label{tab:component_ablation}
\small
\setlength{\tabcolsep}{4.5pt}
\renewcommand{\arraystretch}{1.08}
\resizebox{\textwidth}{!}{%
\begin{tabular}{lccccccccc}
\toprule
\multirow{2}{*}{\textbf{Method}} &
\multicolumn{2}{c}{\textbf{Initialization}} &
\multicolumn{2}{c}{\textbf{On-Policy Objectives}} &
\multirow{2}{*}{\shortstack{\textbf{Memory}\\\textbf{Interface}}} &
\multicolumn{2}{c}{\textbf{PersonaMem-32K}} &
\multicolumn{2}{c}{\textbf{PersonaMem-128K}} \\
\cmidrule(lr){2-3}
\cmidrule(lr){4-5}
\cmidrule(lr){7-8}
\cmidrule(lr){9-10}
&
\textbf{Writer} &
\textbf{Reader} &
\textbf{OPD} &
\textbf{GRPO} &
&
\textbf{Mean $\uparrow$} &
\textbf{Pass@16 $\uparrow$} &
\textbf{Mean $\uparrow$} &
\textbf{Pass@16 $\uparrow$} \\
\midrule
w/o Writer Initialization
& \xmark & \cmark & \cmark & \cmark
& Soft
& 70.8 & 82.0 & 80.0 & 85.7 \\

w/o Reader Initialization
& \cmark & \xmark & \cmark & \cmark
& Soft
& 47.9 & 76.0 & 46.2 & 67.0 \\

w/o OPD
& \cmark & \cmark & \xmark & \cmark
& Soft
& 71.9 & 84.0 & 86.9 & 91.8 \\

w/o GRPO
& \cmark & \cmark & \cmark & \xmark
& Soft
& 58.6 & 78.0 & 70.2 & 82.3 \\

Initialization Only
& \cmark & \cmark & \xmark & \xmark
& Soft
& 60.5 & 82.0 & 70.8 & 85.1 \\

\midrule
Text-Space Control
& \cmark & \textit{N/A} & \cmark & \cmark
& Text
& 69.3 & \textbf{88.0} & 85.5 & 91.9 \\

\mflogo\,\textbf{\texttt{MemFold}}
& \cmark & \cmark & \cmark & \cmark
& Soft
& \textbf{75.4} & \textbf{88.0}
& \textbf{87.9} & \textbf{93.6} \\
\bottomrule
\end{tabular}%
}
\end{table*}

\section{Discussion}
\label{sec:discussion}

We examine how the learned interface transfers across datasets, how efficiently
it trains, how its budget affects accuracy and cost, and whether the reader uses
instance-specific memory.
A final example illustrates a failure of personalization in a single response. 

\begin{questionbox}
Does a soft-memory interface learned on one dataset still work when the
dataset, the evaluation format, and the kind of memory the task demands all
change?
\end{questionbox}

We consider two OOD settings that target different memory capabilities. For PrefEval, we directly evaluate frozen PersonaMem-32K checkpoints on its implicit-persona subset. This setting preserves the underlying task of modeling user preferences while changing the dataset distribution and evaluation format, thereby measuring cross-dataset personalization transfer. For LongMemEval, we evaluate checkpoints trained and selected only on LoCoMo. This setting tests a broader form of long-context memory generalization involving multi-session reasoning, temporal relations, and knowledge updates. As shown in Table~\ref{tab:main_tab}, source-domain improvements do not necessarily translate into stronger OOD performance. On PrefEval, GRPO and OPSD provide inconsistent gains and can even underperform the untrained Full Text baseline for some backbones, suggesting that their optimization may specialize to the source-domain supervision and input format. The latent-memory baselines perform poorly on LongMemEval, though not uniformly on PrefEval, where MemGen remains competitive. In contrast, \textbf{\texttt{MemFold}} maintains more robust performance across the personalization and long-context OOD settings, indicating that the learned soft-memory interface remains transferable under dataset-distribution shifts. The transfer is not uniform: on PrefEval with Qwen2.5-7B-Instruct, \textbf{\texttt{MemFold}} only ties the strongest baselines, and the methods occupy a narrow accuracy range, making this setting less discriminative than the others.

\begin{questionbox}
Does the learned memory interface also improve training efficiency?
\end{questionbox}

Figure~\ref{fig:exp} compares training with Qwen2.5-3B-Instruct on PersonaMem-32K.
\textbf{\texttt{MemFold}} improves accuracy more rapidly with optimizer updates
and reaches comparable accuracy with fewer student rollouts. Its reward and
distillation objectives reuse the same responses, while the teacher scores
sampled tokens without autoregressive generation. These curves compare complete
method configurations, including different inputs and initializations
(Appendix~\ref{app:efficiency_comparison}). The observed advantage therefore
reflects the full training recipe. The curves measure optimization and rollout
efficiency; total training cost also includes initialization and teacher forward
passes.

\Needspace{17\baselineskip}
\begin{questionbox}
How much does the outcome depend on the particular budget $K$ we chose, in
accuracy and in what it costs to run?
\end{questionbox}

\begin{wrapfigure}[13]{r}{0.30\textwidth}

\centering
\captionsetup{font=small,justification=raggedright,singlelinecheck=false,skip=3pt}
\includegraphics[width=\linewidth]{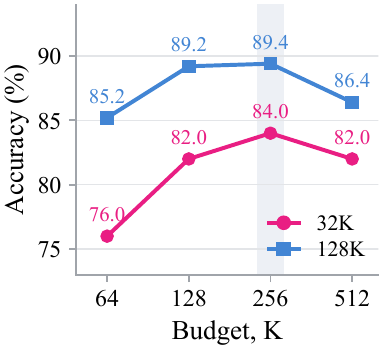}
\caption{Accuracy across budgets and history lengths (Qwen3-4B).}
\label{fig:budget_accuracy}
\end{wrapfigure}
\textbf{Sensitivity of accuracy to the budget.}
Increasing $K$ eightfold from 64 to 512 does not produce monotonic gains
(Figure~\ref{fig:budget_accuracy}). Both curves are single-peaked at $K=256$
and nearly flat from 128 upward, so only the smallest budget is clearly
under-provisioned and the useful range is broad rather than a sharp optimum.
The longer histories give a flatter curve, as expected if the writer is already
discarding most of the history before compression, so that adding vectors
changes what survives less than it changes how much. The decline at 512 is the
more informative end. Extra capacity is not free here, and because the
compressor and reader are initialized under a fixed budget that we did not
retune per $K$, we read that decline as a property of this training recipe
rather than as evidence that more vectors carry less.

\Needspace{13\baselineskip}
\begin{wrapfigure}[13]{r}{0.30\textwidth}

\centering
\captionsetup{font=small,justification=raggedright,singlelinecheck=false,skip=0pt}
\includegraphics[width=0.97\linewidth]{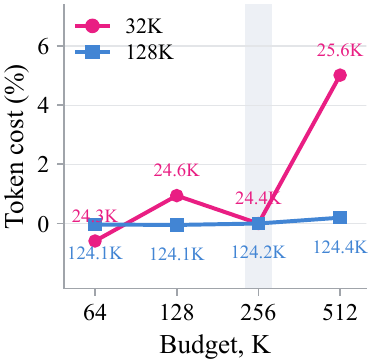}
\caption{Token cost relative to $K=256$ (Qwen3-4B); labels show absolute counts.}
\label{fig:budget_cost}
\end{wrapfigure}
\textbf{Cost of the budget.}
End-to-end token cost barely moves across budgets
(Figure~\ref{fig:budget_cost}), varying by less than one percent on the longer
benchmark. The change is not monotonic in $K$: increasing the budget from 64
to 512 adds 448 vectors to the reader's input, roughly a third of the token
difference measured on PersonaMem-32K. Most of the variation therefore comes
from generated tokens. Every configuration still reads the interaction history
once to build the textual memory; on the longer benchmark, that pass accounts
for all but a fraction of a percent of the per-instance cost. Enlarging the
budget therefore costs little, and shrinking it saves little. A compact memory
keeps the reader's interface independent of history length, but does not remove
the cost of reading the full interaction history for each instance.

\Needspace{18\baselineskip}
\begin{questionbox}
Is the reader actually using the memory built for this instance, or answering
from what the task alone makes likely?
\end{questionbox}

\begin{wrapfigure}[14]{r}{0.30\textwidth}

\centering
\captionsetup{font=small,justification=raggedright,singlelinecheck=false,skip=8pt}
\includegraphics[width=\linewidth]{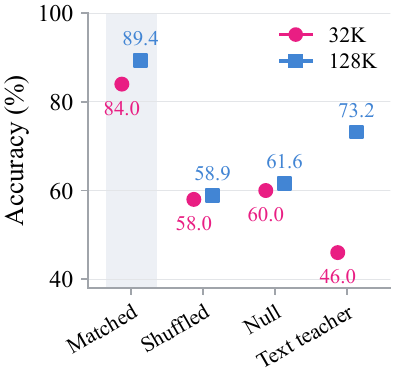}
\caption{Accuracy across memory conditions and the text teacher (Qwen3-4B).}
\label{fig:memory_reliance}
\end{wrapfigure}
Replacing the matched soft memory with shuffled or null memory causes substantial accuracy drops on both datasets (Figure~\ref{fig:memory_reliance}). The model therefore depends on the content of the memory built for this instance, rather than ignoring it or answering from task priors alone. The intervention establishes that the memory is used; it does not establish which parts of it are used, and in particular it does not show that the reader resolves the correct time-dependent version of a preference when the history contains several. The textual-memory teacher is not an accuracy oracle: its task accuracy is well below the final student's. This is consistent with how $\mathcal{L}_{\mathrm{OPD}}$ uses it (Section~\ref{sec:on_policy_optimization}). The gate weights only tokens the student has already sampled, and its influence on any single token is bounded, so the teacher can reinforce tokens the student under-reads from its memory without its lower accuracy dominating the update. GRPO drives most of the task-level gain (Section~\ref{sec:ablation}).

\begin{questionbox}
What does a personalization failure actually look like in a response, when the
aggregate scores only say that one was produced?
\end{questionbox}

Aggregate scores do not reveal localized personalization failures. Figure~\ref{fig:prefeval_case} shows one such case: although the GRPO and OPSD responses are generally relevant, both recommend wearable trackers that contradict the user's preference. In contrast, \textbf{\texttt{MemFold}} provides preference-consistent alternatives, including journaling, body measurements, and progress photos. This example illustrates a failure pattern rather than its frequency.

\begin{figure*}[h]
\centering

\small
\begin{minipage}{0.98\textwidth}
\textbf{User preference:}
\textit{``I dislike using wearable technology and fitness trackers.''}

\textbf{Query:}
\textit{``Can you recommend some effective ways for me to monitor my
fitness progress?''}
\end{minipage}

\begin{minipage}[t]{0.315\textwidth}
\begin{responsebox}
\scriptsize

\textcolor{casemuted}{\ldots}
Keep a record of your weight, body measurements, heart rate, and workout
performance.

\textcolor{casebad}{
\textbf{Use a Fitness Tracker:}
Wearables like Fitbit or Apple Watch can monitor steps, calories burned,
and sleep quality}, providing real-time feedback.

\textcolor{casemuted}{
\ldots\ response continues until the generation limit.
}
\end{responsebox}

\centering
\textbf{(a) GRPO}

{\scriptsize
\textcolor{casebad}{Incorrect: violates preference}\\
300 answer tokens
}
\end{minipage}
\hfill
\begin{minipage}[t]{0.315\textwidth}
\begin{responsebox}
\scriptsize

\textcolor{casemuted}{\ldots}
Track your workouts, measure body metrics, take photos, and set SMART goals.

\textcolor{casebad}{
\textbf{Use Wearable Technology:}
Devices like smartwatches or fitness trackers can monitor heart rate,
steps, calories burned, and sleep quality.
}

\textcolor{casemuted}{
\ldots\ response continues until the generation limit.
}
\end{responsebox}

\centering
\textbf{(b) OPSD}

{\scriptsize
\textcolor{casebad}{Incorrect: violates preference}\\
300 answer tokens
}
\end{minipage}
\hfill
\begin{minipage}[t]{0.320\textwidth}
\begin{responsebox}
\scriptsize

Track your workouts with a fitness app or
\textcolor{casegood}{\textbf{journal}}, and record your strength and
endurance gains.

Take regular
\textcolor{casegood}{\textbf{body measurements}}
and use a
\textcolor{casegood}{\textbf{progress photo log}}
to monitor changes in body composition.

Set specific, measurable goals, stay consistent, and adjust your plan
as needed.
\end{responsebox}

\centering
\textbf{(c) \textbf{\texttt{MemFold}}}

{\scriptsize
\textcolor{casegood}{Correct: respects preference}\\
96 answer tokens
}
\end{minipage}

\caption{
A PrefEval implicit-persona example with Qwen3-4B. The displayed preference
summarizes the interaction history and is not given explicitly to the models.
GRPO and OPSD recommend wearable trackers, whereas
\textbf{\texttt{MemFold}} respects the preference by suggesting non-wearable
alternatives. Colored spans mark preference-relevant content. Token counts
cover generated answers only; this example illustrates a failure mode, not its
frequency or end-to-end efficiency.
}
\label{fig:prefeval_case}

\end{figure*}

\section{Conclusion}

We presented \textbf{\texttt{MemFold}}, which judges a compact personalized memory
by the generations it supports rather than by the text it reconstructs. A
query-conditioned textual memory is compressed into $K$ soft vectors forming a
fixed-budget reader interface, and the reader is optimized on its own rollouts
under two complementary signals: group-relative rewards for task outcomes, and
confidence-gated on-policy distillation in which a frozen textual-memory teacher
re-scores the student's sampled tokens. Across three backbones it achieves the
highest accuracy we measured on PersonaMem-32K and PersonaMem-128K, transfers to
PrefEval and LongMemEval without target-domain training, and still depends on
instance-specific memory under shuffled- and null-memory interventions.

\clearpage
\bibliographystyle{unsrtnat}
\bibliography{iclr2027_conference}

\newpage
\appendix
\section{Implementation Details}

\subsection{\textbf{\texttt{MemFold}}}

\subsubsection{Textual Memory Construction}
\label{app:memory_extraction}

\textbf{Memory format.} For each context--query pair, the external
extractor produces a structured memory $M$ consisting of three fields:
grounded evidence, temporal relations, and derived facts. Memories are
extracted by GLM-5.2~\citep{zeng2026glm} from the history and the question
only; the reference answer is never provided to the extractor. We retain
only annotations that are grounded in the interaction history and relevant
to the corresponding query. The adapter is trained on the resulting
context--query--memory examples and subsequently writes $M$ without the
external extractor; we denote the adapter after this stage by $\theta_w$.

\paragraph{Example.}
\label{app:memory_example}
Figure~\ref{fig:memory_example} presents a representative memory template,
illustrating how evidence, temporal relations, and derived facts are organized
within the structured textual memory.

\begin{figure}[h]
\centering

\begin{memorybox}{A Representative Extracted Memory}
\footnotesize

\textbf{Evidence}
\begin{itemize}
    \setlength{\itemsep}{1pt}
    \setlength{\parskip}{0pt}
    \setlength{\parsep}{0pt}

    \item The user organized a volunteering day at a local shelter
    for the whole family to help the shelter and show the kids the
    importance of supporting those in need.

    \item The user later decided against future family volunteering
    after observing that the children felt resentful, and instead
    chose to prioritize activities aligned with their own interests.

    \item The user started a family photo challenge with weekly themes
    to encourage creativity and discussion among family members.

    \item The user enjoys writing a personal finance blog and views
    writing as a more suitable way to influence and support others.
\end{itemize}

\textbf{Temporal Relations}
\begin{itemize}
    \setlength{\itemsep}{1pt}
    \setlength{\parskip}{0pt}
    \setlength{\parsep}{0pt}

    \item The user first encouraged structured family volunteering,
    but later shifted toward child-led activities after observing
    reduced engagement.

    \item The user initially participated in a financial literacy
    volunteer program, but later concluded that blogging better
    matched their preferred way of helping others.
\end{itemize}

\textbf{Derived Facts}
\begin{itemize}
    \setlength{\itemsep}{1pt}
    \setlength{\parskip}{0pt}
    \setlength{\parsep}{0pt}

    \item The user currently prioritizes child-led and interest-based
    activities for maintaining positive family dynamics.

    \item The user prefers teaching and influencing others through
    writing and creative projects rather than formal programs.
\end{itemize}

\end{memorybox}

\caption{
A representative structured textual memory extracted from a long
PersonaMem interaction history. For readability, we show a representative
subset of the extracted entries.
}
\label{fig:memory_example}
\end{figure}

\subsubsection{Soft-Memory Construction and Initialization}
\label{app:stage2}

\paragraph{Compressor architecture.}
We serialize the fields of textual memory $M$ in a fixed order and encode the
resulting sequence with a frozen backbone encoder. The sequence is processed in
chunks of at most 2,048 tokens, with every 32 consecutive hidden states
mean-pooled to form the compressor input $H$. A two-layer Perceiver-style
compressor with $K$ learned queries, a latent dimension of 768, and 12 attention
heads aggregates these representations. A backbone-specific projector maps the
compressed states into the reader's input embedding space:
\[
\comp(H)
=
P_{\phi}
\left(
    \operatorname{Comp}_{\phi}(Q_K,H)
\right)
\in\mathbb{R}^{K\times d},
\]
where $\comp$ denotes the composition of the compressor and projector.
Unless otherwise specified, we use $K=256$ and train a separate compressor for
each dataset--backbone configuration.

\paragraph{Hidden-state layer selection.}
We select the encoder layer using a frozen, probe-free retrieval experiment
over 3,347 memory records from 272 sessions. Candidate layers are evaluated on
their ability to distinguish competing attribute values and temporal updates.
Selection uses the mean development AUC of the two tasks, while the held-out
split is used only for reporting. As shown in
Table~\ref{tab:layer_selection}, the fourth Transformer block achieves the
highest development score for all three backbones and consistently performs
best on held-out update discrimination. We therefore use its output as the
compressor input.

\begin{table}[tbp]
\centering
\caption{
Hidden-state layer selection. Dev. Score is the mean development AUC over
value and update discrimination; the remaining metrics are measured on the
held-out update task. Bold indicates the selected layer.
}
\label{tab:layer_selection}
\small
\setlength{\tabcolsep}{5pt}
\renewcommand{\arraystretch}{1.08}

\begin{tabular}{@{}llrrrr@{}}
\toprule
\textbf{Backbone}
& \textbf{Layer}
& \textbf{Dev. Score}
& \textbf{Update Top-1}
& \textbf{Update AUC}
& \textbf{Margin} \\
\midrule

\multirow{5}{*}{Qwen2.5-3B}
& L1
& 98.98
& 96.12
& 97.55
& 0.207 \\

& \textbf{L4}
& \textbf{100.00}
& \textbf{100.00}
& \textbf{100.00}
& \textbf{0.319} \\

& L12
& 94.39
& 79.61
& 90.37
& 0.091 \\

& L24
& 86.13
& 67.96
& 79.04
& 0.042 \\

& L36 (final)
& 77.49
& 37.86
& 51.76
& $-0.041$ \\

\midrule

\multirow{5}{*}{Qwen3-4B}
& L1
& 95.92
& 93.20
& 96.35
& 0.148 \\

& \textbf{L4}
& \textbf{98.98}
& \textbf{98.06}
& \textbf{99.19}
& \textbf{0.298} \\

& L12
& 96.94
& 87.38
& 90.27
& 0.151 \\

& L24
& 84.85
& 60.19
& 72.15
& 0.038 \\

& L36 (final)
& 77.29
& 41.75
& 55.48
& $-0.021$ \\

\midrule

\multirow{5}{*}{Qwen2.5-7B}
& L1
& 97.62
& 96.12
& 97.78
& 0.266 \\

& \textbf{L4}
& \textbf{98.98}
& \textbf{100.00}
& \textbf{100.00}
& \textbf{0.321} \\

& L12
& 87.96
& 69.90
& 80.04
& 0.078 \\

& L24
& 76.95
& 33.98
& 48.29
& $-0.037$ \\

& L28 (final)
& 73.69
& 33.01
& 47.60
& $-0.024$ \\

\bottomrule
\end{tabular}
\end{table}

\paragraph{Initialization overview.}
The soft-memory interface is initialized through four sequential procedures:
compressor reconstruction, representation warmup, auxiliary reasoning
adaptation, and reader initialization. These procedures optimize distinct
objectives rather than a single combined initialization objective. Their optimization
settings and trainable modules are summarized in
Table~\ref{tab:train_config}.

For example $i$, let $c_i$ denote its shared-context identifier, $h_i$ its
permitted history prefix, $q_i$ its question, $o_i$ its answer options,
$m_i$ its textual memory, $s_i$ its auxiliary reasoning target, and $a_i$
its answer target. During compressor reconstruction and representation warmup, the soft
representation is computed from cached history-prefix states:
\[
Z_i=\comp(E(h_i)).
\]
During auxiliary reasoning adaptation and reader initialization, it is computed
from textual-memory states:
\[
Z_i=\comp(E(m_i)),
\]
where $E$ is the frozen backbone encoder. Cached encoder states are treated as
constants, and gradients propagate through $\comp$.

We obtain a context-level unit representation by averaging over the $K$ soft
positions and applying $\ell_2$ normalization:
\[
u_i
=
\operatorname{norm}_2
\left(
    \frac{1}{K}\sum_{k=1}^{K}Z_{ik}
\right).
\]
For examples $i$ and $j$ from different shared contexts, the separation loss is
\[
\mathcal L_{\mathrm{sep}}(i,j)
=
\left[
    \sqrt{2-2\kappa}
    -
    \lVert u_i-u_j\rVert_2
\right]_+,
\qquad
\kappa=0.8.
\]
All cross-entropy terms below are averaged over supervised target positions;
soft prefixes, prompts, and padding positions are excluded.

\paragraph{Compressor reconstruction.}
The frozen decoder reconstructs the textual memory from the compressed
history representation:
\[
\mathcal L_{\mathrm{rec}}(i)
=
-\frac{1}{|m_i|}
\sum_{t=1}^{|m_i|}
\log
p_{\theta_0}
\left(
    m_{it}
    \mid
    Z_i,q_i,m_{i,<t}
\right).
\]
The complete reconstruction objective is
\[
\boxed{
\mathcal L_{\mathrm{reconstruction}}
=
\mathbb E_i
\left[
    \mathcal L_{\mathrm{rec}}(i)
    +
    \mathcal L_{\mathrm{sep}}(i,j)
\right],
}
\]
where $j$ is sampled from a different shared context. The encoder and decoder
remain frozen, while the full compressor and projector are updated. No answer
or reasoning supervision is used in this procedure.

\paragraph{Representation warmup.}
Representation warmup operates on two cached history-state views per context
and uses no decoder or answer supervision. It combines separation across
different contexts, alignment between views of the same context, and
decorrelation among context prototypes:
\[
\boxed{
\mathcal L_{\mathrm{warmup}}
=
\mathcal L_{\mathrm{sep}}^{\mathrm{all}}
+
0.1\,\mathcal L_{\mathrm{align}}
+
0.1\,\mathcal L_{\mathrm{Gram}}.
}
\]
Here, $\mathcal L_{\mathrm{sep}}^{\mathrm{all}}$ averages the separation hinge
over different-context pairs, $\mathcal L_{\mathrm{align}}$ is the mean cosine
distance between same-context views, and $\mathcal L_{\mathrm{Gram}}$ penalizes
squared off-diagonal similarities between normalized context prototypes. The
entire compressor and projector are updated.

\paragraph{Auxiliary reasoning adaptation.}
The third procedure adapts the compressor using textual-memory states and
evidence-grounded reasoning targets. Let
\[
\ell_i(Z) = -\frac{1}{|s_i|}\sum_{t=1}^{|s_i|}
\log p_{\theta_0,\omega}\!\left(s_{i,t}\mid Z, q_i, o_i, s_{i,<t}\right)
\]
denote the reasoning-target negative log-likelihood. For a mismatched memory
from a different context, we define
\[
\mathcal L_{\mathrm{rank}}(i,j)
=
\left[
    \delta
    +
    \ell_i(Z_i)
    -
    \ell_i(Z_j)
\right]_+.
\]
The question, options, and reasoning target remain fixed, so only the soft
memory is replaced in the negative example. The complete objective is
\[
\boxed{
\mathcal L_{\mathrm{aux}}
=
\mathbb E_i
\left[
    \ell_i(Z_i)
    +
    \lambda_{\mathrm{rank}}
    \mathcal L_{\mathrm{rank}}(i,j)
    +
    0.1\,\mathcal L_{\mathrm{sep}}(i,j)
\right].
}
\]
We use
$(\lambda_{\mathrm{rank}},\delta)=(0.2,0.05)$ in the first epoch and
$(1.0,0.1)$ in the remaining epochs. This procedure updates the full compressor and projector together with a
temporary rank-8 LoRA $\omega$ on the frozen backbone. $\omega$ is
discarded afterward; only the adapted compressor and projector are
transferred to reader initialization.

\paragraph{Reader initialization.}
The final initialization procedure trains the reader to generate gold
answers from self-generated textual memories. Specifically, the
writer-initialized adapter $\theta_w$ generates and caches
\(
M_i^{w}
=
e_{\theta_w}(C_i,q_i).
\)

The adapter continues training from $\theta_w$. Let $\phi_0$ denote
the compressor parameters at the beginning of reader initialization,
and retain a frozen reference copy. The trainable and reference soft
representations are

\[
Z_i
=
C_{\phi}\!\left(E(M_i^{w})\right),
\qquad
Z_i^{0}
=
\operatorname{sg}\!\left[
C_{\phi_0}\!\left(E(M_i^{w})\right)
\right].
\]
The answer objective is
\begin{equation*}
\mathcal{L}_{\mathrm{answer}}(i) = -\frac{1}{|a_i|}\sum_{t=1}^{|a_i|}
\log \pi_\theta\!\left(a_{i,t}\mid Z_i, q_i, o_i, a_{i,<t}\right).
\end{equation*}
To limit drift in the soft-memory representation, we use the normalized
anchoring loss
\begin{equation*}
\mathcal{L}_{\mathrm{anchor}}(i) =
\frac{\frac{1}{Kd}\,\|Z_i - Z_i^0\|_F^2}
{\max\!\left(\frac{1}{Kd}\,\|Z_i^0\|_F^2,\ 10^{-8}\right)}.
\end{equation*}
The complete reader-initialization objective is
\begin{equation*}
\mathcal{L}_{\mathrm{reader\text{-}init}} =
\mathbb{E}_i\!\left[\mathcal{L}_{\mathrm{answer}}(i) + 0.1\,\mathcal{L}_{\mathrm{anchor}}(i)\right].
\end{equation*}
Ranking, auxiliary reasoning supervision, and writer replay are disabled
in this procedure. We update the reader LoRA $\theta$, the final resampler
layer, the resampler's output normalization layer, and the projector,
while keeping the backbone $\theta_0$, the remaining compressor
parameters, and the reference compressor frozen. The resulting adapter, $\theta_{\mathrm{init}}$, initializes
on-policy optimization and, when kept frozen, serves as the textual-memory
teacher $\pi_T$. Before on-policy optimization begins, it is also used
once to generate a new fixed training-memory snapshot,
\(
M_i^{\mathrm{init}}
=
e_{\theta_{\mathrm{init}}}(C_i,q_i),
\)

which is cached and held fixed throughout on-policy training.

\subsubsection{On-Policy Training Details}
\label{app:stage3}

\paragraph{Memory snapshots.}
For reader initialization, training memories are generated once by
the writer-initialized adapter $\theta_w$ using greedy decoding and
cached. After selecting the reader-initialization checkpoint
$\theta_{\mathrm{init}}$, we regenerate the training memories once
with $\theta_{\mathrm{init}}$:
\[
M^{\mathrm{init}}
=
e_{\theta_{\mathrm{init}}}(C,x).
\]
These memories and their encoder representations are cached and
remain fixed throughout on-policy optimization; they are not refreshed
after policy updates. The teacher reads the cached textual memory
$M^{\mathrm{init}}$, while the student receives
$Z=C_\phi(E(M^{\mathrm{init}}))$ compressed from the same memory.
At evaluation, memories are generated by the final adapter $\theta$,
so that a single adapter both writes and reads at test time.

\paragraph{OPD implementation.} In code, the OPD term is computed as $\bar g_{i,t}\,(\ell_{T,i,t}-\ell_{\theta,i,t})$, with both $\ell_{T,i,t}$ and $\bar g_{i,t}$ detached. Because $\ell_{T,i,t}$ carries no gradient, this differs from Eq.~\ref{eq:opd} only by a constant and yields the same gradient, $-\bar g_{i,t}\nabla_\theta \ell_{\theta,i,t}$; only the logged loss value differs.

\textbf{Rewards and advantages.} For PersonaMem, $r(y,a)=1$ if the parsed
option of $y$ matches the gold answer $a$, and $0$ otherwise. For LoCoMo,
we case-fold the generated and reference answers and extract word units
using \texttt{[\textbackslash w]+}, yielding sequences $S_y$ and $S_a$. Let
\begin{equation*}
O = \sum_{w} \min\!\big(\mathrm{count}_{S_y}(w),\, \mathrm{count}_{S_a}(w)\big).
\end{equation*}
The LoCoMo reward is
\begin{equation*}
r(y,a) = 0.75\,\frac{2O}{|S_y| + |S_a|} + 0.25\,\mathbb{1}\{S_y = S_a\},
\end{equation*}
and is set to zero if either sequence is empty. No additional format
penalty is applied. Group-relative advantages $\hat A_i$ follow
Section~\ref{sec:on_policy_optimization}; a group whose reward variance is at most
$10^{-6}$ is treated as having identical rewards, so all its $\hat A_i$
are set to $0$. Such a group contributes no GRPO gradient, while the
other enabled objectives remain active.

\paragraph{OPD token mask.}
OPD is computed only over student-generated response positions. The mask
excludes the prompt and soft-memory prefix, includes the first EOS token, and
excludes padding after EOS; if no EOS is generated, the full response is used.

\subsection{Training Configuration}
\label{app:training_configuration}

Table~\ref{tab:shared_hparams} summarizes the shared architectural and algorithmic hyperparameters used by \textbf{\texttt{MemFold}}. Unless otherwise specified, we use a fixed soft-memory budget of 256 tokens and extract textual-memory representations from the fourth Transformer block.

\begin{table}[h]
\centering
\caption{Shared architecture and optimization hyperparameters used by MemFold.}
\label{tab:shared_hparams}
\small
\begin{tabular}{@{}ll@{}}
\toprule
Hyperparameter & Setting \\
\midrule
\multicolumn{2}{@{}l}{\textit{Memory representation and compressor}} \\
Soft-memory budget $K$ & 256 soft tokens \\
Mean-pooling window & 32 hidden states \\
Compressor depth & 2 layers \\
Latent dimension & 768 \\
Attention heads & 12 \\
\midrule
\multicolumn{2}{@{}l}{\textit{Training and optimization}} \\
LoRA rank / scaling / dropout & 16 / 32 / 0.05 \\
OPD gate scale $\beta$ & 5.0 \\
GRPO clip range $\epsilon$ & 0.2 \\
\bottomrule
\end{tabular}
\end{table}

Table~\ref{tab:train_config} reports the procedure-specific optimization configuration for PersonaMem-32K. All procedures use AdamW with weight decay $0.01$ and global gradient clipping at $1.0$.

\begin{table}[h]
\centering
\caption{Training configuration for PersonaMem-32K.}
\label{tab:train_config}
\small
\setlength{\tabcolsep}{4pt}
\begin{tabular}{@{}lccp{3.6cm}@{}}
\toprule
Procedure & Learning Rate & Duration & Updated Modules \\
\midrule
\multicolumn{4}{@{}l}{\textit{Memory-interface initialization}} \\
Memory-writer initialization & $1\times10^{-5}$ & 3 epochs & Adapter \\
Compressor reconstruction & $3\times10^{-4}$ & 25 updates & Compressor, projector \\
Representation warmup & $1\times10^{-4}$ & 100 updates & Compressor, projector \\
Auxiliary reasoning adaptation & \begin{tabular}[t]{@{}c@{}}$1\times10^{-4}\to$\\$5\times10^{-5}/3\times10^{-5}$\end{tabular} & 3 epochs & \raggedright Temporary LoRA $\omega$, compressor, projector \tabularnewline
Reader initialization & $1\times10^{-5}$ & 3 epochs & \raggedright Adapter, selected compressor modules, projector \tabularnewline
\midrule
\multicolumn{4}{@{}l}{\textit{On-policy optimization}} \\
Joint OPD--GRPO optimization & $3\times10^{-7}$ & 3 epochs & Adapter \\
\bottomrule
\end{tabular}
\end{table}

Memory-writer initialization uses cosine learning-rate decay with a 5\% warmup, whereas all subsequent procedures use constant learning rates without warmup. Representation warmup combines cross-context separation, alignment, and Gram-matrix regularization, each with a weight of $0.1$. During auxiliary reasoning adaptation, the ranking weight increases from $0.2$ in the first epoch to $1.0$ in the remaining epochs, while the separation weight remains $0.1$. The corresponding ranking margins are $0.05$ and $0.1$, respectively. Reader initialization additionally applies a soft-output anchoring loss with weight $0.1$.

Memory-writer initialization, reader initialization, and on-policy
optimization train a single LoRA adapter on the query, key, value,
output, gate, up, and down projections. Memory-writer initialization
updates only this adapter, yielding $\theta_w$. Reader initialization
continues training the same adapter together with the final resampler
layer, resampler output normalization, and projector, while the backbone and remaining compressor parameters are frozen. The temporary rank-8 LoRA $\omega$ used for auxiliary reasoning adaptation is discarded before reader initialization. During on-policy optimization, only the adapter is updated; the backbone, soft-memory compressor, projector, and textual-memory teacher remain frozen.

For PersonaMem, on-policy rollouts use eight responses per query, a temperature of $1.0$, top-$p$ of $0.98$, and a maximum response length of five tokens. We set $\lambda_{\mathrm{OPD}}=0.02$ and $\lambda_{\mathrm{GRPO}}=0.3$. For LoCoMo, we use four responses per query, a temperature of $0.8$, top-$p$ of $0.95$, and a maximum response length of 64 tokens. Both OPD and GRPO are assigned a coefficient of $1.0$. We use no additional reference-policy or KL regularization.

Frozen-backbone computation uses BF16, whereas LoRA parameters and numerically sensitive log-probability and loss computations use FP32. Cached encoder representations are stored in FP16 and converted to BF16 when loaded. Memory-writer initialization uses FlashAttention-2, while reader initialization and on-policy optimization use SDPA. These procedures are trained with distributed data parallelism on four GPUs, whereas compressor reconstruction, representation warmup, and auxiliary reasoning adaptation use a single GPU. All experiments were conducted on NVIDIA H200 GPUs. In the PersonaMem-32K/Qwen2.5-3B efficiency experiment, on-policy optimization completed 369 updates on four GPUs in approximately 12.9 minutes of training wall-clock time, excluding evaluation and the preceding initialization procedures.

\subsection{Datasets \& Benchmarks}
\label{app:evaluation_protocols}

\noindent\textbf{PersonaMem-32K and PersonaMem-128K.} PersonaMem \citep{jiang2025know} evaluates whether models can track evolving user preferences and answer personalized multiple-choice questions based on long conversational histories. The 32K and 128K variants differ in context length, and our evaluation uses 50 and 233 test questions, respectively.

\noindent\textbf{PrefEval.} PrefEval \citep{zhao2025llmsrecognizepreferencesevaluating} evaluates whether models can infer and apply implicit user preferences from interaction histories; we use its 1,000-example implicit-persona subset to assess cross-dataset personalization generalization. We evaluate the PersonaMem-32K-trained checkpoints and score their predictions using the Gemini API.

\noindent\textbf{LongMemEval.} LongMemEval \citep{wu2024longmemeval} evaluates long-term interactive memory across tasks such as knowledge updates, multi-session reasoning, temporal reasoning, and user-preference recall. We use the cleaned 500-question LongMemEval-S split as a held-out benchmark and evaluate answers with \texttt{gemini-3.8-flash} using LOW thinking and structured yes/no outputs.






\subsection{Baseline Implementation Details}
\label{app:baseline_details}

All baselines use the same Qwen backbone, source-domain data splits,
visible-history boundary, and evaluation protocol within each
experimental setting. Hyperparameters and checkpoints are selected
only on the source-domain validation set, and all parameters remain
frozen during OOD evaluation. Because the original implementations of
xRAG, MemGen, and AutoCompressor do not directly support our Qwen
backbones and datasets, we describe them as adapted implementations
rather than exact reproductions of official checkpoints.

\noindent\textbf{xRAG.}
Our xRAG implementation preserves the original frozen-retriever,
frozen-language-model, and projector-only training design
\citep{cheng2024xrag}. A frozen GTE-large encoder retrieves individual
messages from the visible history, and a two-layer MLP with a GELU
activation maps each retrieved vector to one soft token in the Qwen
embedding space. We first train the projector to reconstruct
source-domain context messages and then fine-tune it on source-domain
QA while keeping both the retriever and Qwen backbone frozen. The
retrieval query, top-$k$, and learning rate are selected exclusively
using source-domain validation data.

\noindent\textbf{MemGen.}
Our MemGen implementation uses the official Weaver core
\citep{zhang2026memgen} with an interface adapted to Qwen and our data
formats. The Weaver generates latent memories that are injected into
the reasoner, using eight prompt latents, eight inference latents, and at most five inference-time augmentations. The Weaver, projection parameters, latent parameters, and their associated rank-16 LoRA modules are trained on the source-domain split. Histories exceeding the Weaver's input limit are split into chunks of 28,672 tokens so that every visible message is processed; no history is truncated. The reported PersonaMem and PrefEval checkpoints use the trained Weaver latent memory with the Trigger disabled, and therefore do not reflect
MemGen's full Trigger-based closed-loop procedure.

\noindent\textbf{AutoCompressor.}
We adapt the recurrent summary-token mechanism of AutoCompressor
\citep{chevalier2023adapting} to the Qwen backbones. Each visible
history is divided into 1,536-token segments, with every segment
producing 32 learned summary vectors that are carried forward to
compress subsequent segments. The final reader receives only the
accumulated summaries, system instruction, and question rather than
the compressed raw segments. We jointly train the summary embeddings
and rank-16 Qwen LoRA modules for one source-domain epoch, using a fixed
seed of 42 and without selecting the training duration based on test
performance.

\noindent\textbf{OPSD.}
We implement OPSD \citep{zhao2026self} using a fixed version of its
official trainer as a full-history QA baseline. The student receives the complete history visible before the query time $\tau$ together with the question, while the frozen teacher may additionally access the reference answer for the same source-domain training example, as specified by the method. Training
uses generalized Jensen--Shannon divergence with point-wise clipping
of $0.05$ and no task reward. The reader is trained with rank-64 LoRA,
a learning rate of $5\times10^{-6}$, and temperature $1.1$ for 500 steps; reference answers are never available during validation, OOD
evaluation, or test-time inference.

\noindent\textbf{Vanilla GRPO.}
Vanilla GRPO \citep{shao2024deepseekmath} is implemented with TRL as a
full-history QA baseline without teacher-distribution supervision.
For each source-domain question, the policy samples eight completions
and computes advantages using group-normalized task rewards without a
separate value model. PersonaMem uses strict multiple-choice exact
match as the reward, whereas open-ended source-domain QA uses
normalized token F1. We set the KL coefficient to zero, use rank-64
LoRA with a learning rate of $5\times10^{-6}$, and train for 1000 steps
at temperature $1.2$; evaluation uses the same deterministic decoding
protocol as the other methods.



\section{Evaluation Details}

\subsection{Efficiency Comparison Setup}
\label{app:efficiency_comparison}

Figure~\ref{fig:exp} compares the training dynamics of five complete method configurations using Qwen2.5-3B-Instruct on PersonaMem-32K. The figure is intended as a system-level comparison of optimization and rollout efficiency, rather than a controlled ablation in which only the loss function changes. SDPO~\citep{hubotter2026reinforcement} and GRPO+OPD appear only in this comparison; the latter applies our OPD objective to a full-history student without soft memory.

\begin{table}[h]
\centering
\caption{Configurations compared in Figure~\ref{fig:exp}.}
\label{tab:efficiency_configurations}
\small
\setlength{\tabcolsep}{4pt}
\renewcommand{\arraystretch}{1.08}

\begin{tabularx}{\linewidth}{
    @{}
    l
    >{\raggedright\arraybackslash}p{0.16\linewidth}
    >{\raggedright\arraybackslash}p{0.16\linewidth}
    >{\raggedright\arraybackslash}p{0.20\linewidth}
    >{\raggedright\arraybackslash}X
    @{}
}
\toprule
\textbf{Method}
& \textbf{Student Input}
& \textbf{Initialization}
& \textbf{Supervision}
& \textbf{Loss} \\
\midrule

GRPO
& Full interaction history
& Base model
& Task reward
& $\mathcal{L}_{\mathrm{GRPO}}$ \\
\addlinespace[3pt]

OPSD
& Full interaction history
& Base model
& On-policy self-distillation
& $\mathcal{L}_{\mathrm{OPSD}}$ \\
\addlinespace[3pt]

SDPO
& Full interaction history
& Base model
& Successful-rollout feedback; EMA teacher
& $\sum_t D_{\mathrm{KL}}(p_t\|q_t)$ \\
\addlinespace[3pt]

GRPO+OPD
& Full interaction history
& Base model
& Task reward; textual-memory teacher
& $\begin{aligned}[t]
  &\lambda_{\mathrm{GRPO}}\mathcal{L}_{\mathrm{GRPO}}\\[-2pt]
  &{}+\lambda_{\mathrm{OPD}}\mathcal{L}_{\mathrm{OPD}}
  \end{aligned}$ \\
\addlinespace[3pt]

\textbf{\texttt{MemFold}}
& $K=256$ soft-memory vectors
& Reader-initialization checkpoint
& Task reward; textual-memory teacher
& $\begin{aligned}[t]
  &\lambda_{\mathrm{GRPO}}\mathcal{L}_{\mathrm{GRPO}}\\[-2pt]
  &{}+\lambda_{\mathrm{OPD}}\mathcal{L}_{\mathrm{OPD}}
  \end{aligned}$ \\

\bottomrule
\end{tabularx}
\end{table}

The GRPO and OPD losses are defined in Eqs.~\ref{eq:grpo}
and~\ref{eq:opd}; for GRPO+OPD, the student conditions on the full
history in place of $Z$. $\mathcal{L}_{\mathrm{OPSD}}$ denotes the
generalized Jensen--Shannon objective with point-wise clipping described
in Appendix~\ref{app:baseline_details}.
For SDPO, the table shows the defining reverse-KL objective in
\citet[Eq.~(1)]{hubotter2026reinforcement}, with
$p_t=\pi_\theta(\cdot\mid C,x,y_{<t})$ and
$q_t=\operatorname{sg}[\pi_{\mathrm{EMA}}(\cdot\mid C,x,f,y_{<t})]$.
Here $f$ is successful-rollout feedback, and the teacher scores the
student's sampled prefixes with gradients stopped. The corresponding
distillation advantage for a candidate token $v$ is
$A_t^{\mathrm{SDPO}}(v)=\log q_t(v)-\log p_t(v)$.

\paragraph{Training-step and rollout accounting.}
A gradient update step denotes one optimizer update after gradient accumulation, rather than one microbatch. Cumulative student rollouts count the number of student responses generated up to each checkpoint. Repeated sampling of the same question contributes a new rollout, whereas teacher forward passes do not. Groups whose rewards are all identical still count toward cumulative rollouts, although they contribute no GRPO gradient. At the final checkpoint of 369 optimizer updates (16 rollouts per update), GRPO, GRPO+OPD, SDPO, and MemFold each produce 5,904 student rollouts, while OPSD produces 5,860 because its final batch is padded to full size and the padded entries are not counted as rollouts.
\paragraph{Evaluation and plotting.}
Each method is trained for a fixed 369 optimizer updates and evaluated at
steps 37, 74, 111, 148, 185, 222, 259, 296, 333, and 369. Training is not
stopped or checkpoint-selected using test performance. Every checkpoint is
evaluated on the same 50 PersonaMem-32K test questions using greedy decoding
under the same fixed set of 12 deterministic answer-option permutations.
The reported curves come from one training run per method; the permutations
are repeated evaluations of the same checkpoint rather than independent
training seeds. The final curves directly connect the measured checkpoints
without smoothing. Where confidence intervals are shown, they are obtained by
bootstrapping test questions and therefore do not represent variation across
training seeds. For the training rollouts in Figure~\ref{fig:exp}, all five
configurations use temperature $1.0$ and top-$p=1.0$.

\subsection{End-to-End Token-Equivalent Accounting}
\label{app:token_accounting}

Let $P$ and $A$ denote the textual reader prompt and generated answer,
respectively, measured using each method's actual prompt and tokenizer. We
count every discrete token processed or generated at each inference stage and
treat each soft, summary, or latent position as one token equivalent. Repeated
processing by different components is counted separately. We exclude padding,
training-only annotation and teacher computation, and evaluation-judge tokens;
cached computation and memory-construction costs are not amortized across
questions or trials. Reported values are averaged over questions within each
trial and then over five trials. This metric measures logical
token-equivalent processing, not FLOPs or wall-clock latency. Table~\ref{tab:token_accounting} summarizes the components counted for each method.

\begin{table}[htbp]
\centering
\caption{End-to-end token-equivalent accounting by method.}
\label{tab:token_accounting}
\scriptsize
\setlength{\tabcolsep}{3.5pt}
\renewcommand{\arraystretch}{1.12}

\begin{tabularx}{\linewidth}{
    @{}
    >{\raggedright\arraybackslash}p{0.18\linewidth}
    >{\raggedright\arraybackslash}p{0.38\linewidth}
    >{\raggedright\arraybackslash}X
    @{}
}
\toprule
\textbf{Method}
& \textbf{\#Tok.}
& \textbf{Accounting Note} \\
\midrule

Full Text / GRPO / OPSD
& $P+A$
& $P$ contains the full visible history, query, and instructions. \\

xRAG
& $T_{\mathrm{retriever}}+P+K_{\mathrm{xRAG}}+A$
& Counts retriever inputs and projected retrieval positions consumed by the reader. \\

AutoCompressor
& $\sum_s T_{\mathrm{AC}}^{(s)}+P+K_{\mathrm{AC}}+A$
& Each compression pass includes processed text and carried summary positions. \\

MemGen
& $T_{\mathrm{Weaver}}+K_{\mathrm{Weaver}}
   +T_{\mathrm{reasoner}}+K_{\mathrm{reasoner}}+A$
& Weaver and reasoner computation are counted separately; repeated history reads are retained. \\

\textbf{\texttt{MemFold}}
& $T_{\mathrm{writer\text{-}in}}
  +T_{\mathrm{memory\text{-}out}}
  +T_{\mathrm{compressor\text{-}in}}
  +P+K+A$
& Counts memory generation and subsequent compressor encoding as separate operations. \\

\bottomrule
\end{tabularx}
\end{table}

\section{Theoretical Analysis of \texorpdfstring{$\mathcal{L}_{\mathrm{OPD}}$}{L\_OPD}}
\label{app:theory}

We analyze the sampled-token surrogate introduced in
Section~\ref{sec:on_policy_optimization}.
Let
$\Delta_{i,t}=\ell_{T,i,t}-\ell_{\theta,i,t}$
and
$\bar g_{i,t}=\operatorname{sg}[\sigma(\beta\Delta_{i,t})]$,
where $\beta>0$.
During differentiation, sampled trajectories and their masks are held fixed.
We distinguish the gradient computed on a sampled batch from its idealized
conditional expectation, and state explicitly which properties hold for each.

\subsection{Gradient of the Sampled Surrogate}

\begin{proposition}[Gradient of the sampled surrogate]
\label{prop:opd_grad}
Let $\widehat{\mathbb{E}}_{\mathcal B}$ denote the masked token average over a
fixed rollout batch $\mathcal B$. The implemented surrogate is
\begin{equation}
\widehat{\mathcal L}_{\mathrm{OPD}}
=
\widehat{\mathbb{E}}_{\mathcal B}
\left[
\bar g_{i,t}
\left(\operatorname{sg}[\ell_{T,i,t}]-\ell_{\theta,i,t}\right)
\right],
\end{equation}
and its gradient is
\begin{equation}
\nabla_\theta\widehat{\mathcal L}_{\mathrm{OPD}}
=
-\widehat{\mathbb{E}}_{\mathcal B}
\left[\bar g_{i,t}\,\nabla_\theta\ell_{\theta,i,t}\right].
\label{eq:opd_grad}
\end{equation}
\end{proposition}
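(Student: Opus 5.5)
The computation is direct differentiation of the surrogate, treating every stop-gradient quantity as a constant. I will start by making explicit what is held fixed. The rollout batch $\mathcal B$ is fixed, and so are the sampled tokens $y_{i,t}$, their prefixes $y_{i,<t}$, the response masks, and the normalizers $1/G$ and $1/|y_i|$ (or the masked-token count). Consequently $\widehat{\mathbb{E}}_{\mathcal B}$ is a finite linear combination $\sum_{(i,t)\in\mathcal B} w_{i,t}(\cdot)$ with nonnegative weights $w_{i,t}$ that do not depend on $\theta$. Since differentiation commutes with a fixed finite sum, I only need the gradient of each summand $\bar g_{i,t}\,(\operatorname{sg}[\ell_{T,i,t}]-\ell_{\theta,i,t})$.

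For a single summand, I will invoke the defining convention of $\operatorname{sg}$: its value equals the argument, and its Jacobian with respect to $\theta$ is zero. Two factors fall under this rule.
\begin{itemize}
\item The gate $\bar g_{i,t}=\operatorname{sg}[\sigma(\beta\Delta_{i,t})]$ is a constant multiplier, even though $\Delta_{i,t}$ itself depends on $\theta$ through $\ell_{\theta,i,t}$.
\item $\operatorname{sg}[\ell_{T,i,t}]$ has zero gradient. This holds regardless, since $\pi_T=\pi_{\theta_{\mathrm{init}}}$ is a frozen copy and does not share parameters with $\theta$ at differentiation time.
\end{itemize}
The product rule then gives $\nabla_\theta[\bar g_{i,t}(\operatorname{sg}[\ell_{T,i,t}]-\ell_{\theta,i,t})] = -\bar g_{i,t}\nabla_\theta \ell_{\theta,i,t}$. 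Summing with the weights $w_{i,t}$ yields \eqref{eq:opd_grad}.

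I will also record that the surrogate differs from the per-batch version of Eq.~\ref{eq:opd} only by the $\theta$-independent term $\widehat{\mathbb{E}}_{\mathcal B}[\bar g_{i,t}\operatorname{sg}[\ell_{T,i,t}]]$. This matches the remark in Appendix~\ref{app:stage3} that the two have the same gradient.

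The only subtle step is justifying that nothing outside the explicit $\ell_{\theta,i,t}$ carries $\theta$-dependence. I will check three possible hidden channels.
\begin{itemize}
\item \emph{Sampling distribution.} The tokens were drawn from $\pi_{\mathrm{old}}=\operatorname{sg}[\pi_\theta]$, but the batch is fixed, so no score-function term appears. This is exactly why the proposition is stated for the sampled surrogate rather than for its expectation.
\item \emph{Memory input.} $Z=\comp(E(M^{\mathrm{init}}))$ is computed with a frozen compressor and encoder, so it does not depend on the LoRA parameters $\theta$.
\item \emph{Masks.} The masks are determined by the sampled EOS positions, which are fixed.
\end{itemize}
With these channels ruled out, $\nabla_\theta\ell_{\theta,i,t}=\nabla_\theta\log\pi_\theta(y_{i,t}\mid x,Z,y_{i,<t})$ is the only surviving term, and the proposition follows.
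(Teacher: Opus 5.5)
Your proposal is correct and takes the same route as the paper: hold the rollout batch, masks, gate, and teacher log-probabilities fixed under stop-gradient, then differentiate each summand, which leaves only $-\bar g_{i,t}\nabla_\theta\ell_{\theta,i,t}$. Your extra checks for hidden $\theta$-dependence (the sampling distribution, the frozen-compressor input $Z$, and the masks) spell out what the paper's one-line proof takes for granted.
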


\begin{proof}
The sampled tokens, masks, gate, and teacher log-probabilities are constant in
the backward pass. Differentiating each summand therefore leaves only the
student log-probability term.
\end{proof}

The update is thus gradient-equivalent to gate-weighted negative
log-likelihood on student-generated tokens, with no gradient through the
teacher or the gate. This is a surrogate gradient with fixed samples, rather
than a total derivative through the rollout distribution.

\paragraph{Gate semantics.}
The gate is monotone in the sampled-token confidence gap:
\begin{equation}
\bar g_{i,t}
\begin{cases}
\to 1, & \beta\Delta_{i,t}\to+\infty,\\
=1/2, & \Delta_{i,t}=0,\\
\to 0, & \beta\Delta_{i,t}\to-\infty.
\end{cases}
\end{equation}
Tokens assigned substantially higher probability by the teacher receive larger
reinforcement weights, equal log-probabilities give weight $1/2$, and tokens
assigned substantially higher probability by the student receive weights
approaching zero. These regimes describe relative confidence on the sampled
token; they do not by themselves establish token correctness or identify the
information lost during compression.

\subsection{Relationship to Reverse KL}
\label{sec:Relation_KL}

Fix a query and generated prefix, denoted collectively by $h$, and write
\[
p_\theta(a)=\pi_\theta(a\mid h,Z),
\qquad
q(a)=\pi_T(a\mid h,M),
\qquad
\Delta(a)=\log q(a)-\log p_\theta(a).
\]
Assume a finite vocabulary and strictly positive probabilities.
With $q$ fixed, the reverse KL
$D_{\mathrm{KL}}(p_\theta\Vert q)
=\mathbb{E}_{a\sim p_\theta}[\log p_\theta(a)-\log q(a)]$
has gradient
\begin{equation}
\nabla_\theta D_{\mathrm{KL}}(p_\theta\Vert q)
=
-\mathbb{E}_{a\sim p_\theta}
\left[\Delta(a)\,\nabla_\theta\log p_\theta(a)\right],
\label{eq:rkl_grad}
\end{equation}
where the term
$\mathbb{E}_{a\sim p_\theta}[\nabla_\theta\log p_\theta(a)]$
arising from differentiating $\log p_\theta$ vanishes by the score-function
identity. Both objectives can therefore be estimated from student-generated
samples. At the level of individual samples, they differ in the coefficient
applied to $\nabla_\theta\log p_\theta(a)$: reverse KL uses the unbounded,
signed log-ratio $\Delta(a)$, whereas OPD uses the bounded, nonnegative weight
$\bar g(a)=\sigma(\beta\Delta(a))\in(0,1)$.
Note, however, that nonnegative sample weights do not imply that every token
probability increases: probability normalization and shared parameters couple
the updates across tokens. The following results make the relationship precise.

\subsection{Expected Update: Stationarity and Bounded Influence}

\begin{proposition}[Conditional stationarity and local attenuation]
\label{prop:opd_stationary}
At a fixed prefix $h$, suppose tokens are sampled exactly from the same
distribution $p_\theta$ used to compute their log-probabilities, and define the
expected surrogate gradient
\begin{equation}
\mathcal G(\theta;h)
=
-\mathbb{E}_{a\sim p_\theta}
\left[\bar g(a)\,\nabla_\theta\log p_\theta(a)\right].
\end{equation}
Then $p_\theta=q$ implies $\mathcal G(\theta;h)=0$. More generally,
\begin{equation}
\|\mathcal G(\theta;h)\|
\le
\frac{\beta}{4}
\sqrt{\mathbb{E}_{a\sim p_\theta}[\Delta(a)^2]}\,
\sqrt{\mathbb{E}_{a\sim p_\theta}
\left[\|\nabla_\theta\log p_\theta(a)\|^2\right]}.
\label{eq:opd_attenuation_bound}
\end{equation}
In particular, if the score second moment is bounded,
$\mathbb{E}_{a\sim p_\theta}[\|\nabla_\theta\log p_\theta(a)\|^2]\le S^2$,
then $\|\mathcal G(\theta;h)\|\le\frac{\beta S}{4}
\sqrt{\mathbb{E}_{a\sim p_\theta}[\Delta(a)^2]}$, which vanishes as the
mean-squared log-probability gap vanishes.
\end{proposition}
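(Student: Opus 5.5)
The approach is to exploit the score-function identity $\mathbb{E}_{a\sim p_\theta}[\nabla_\theta\log p_\theta(a)]=0$, which holds under the stated assumptions of a finite vocabulary and strictly positive probabilities. This identity lets us subtract the constant $1/2=\sigma(0)$ from the gate without changing $\mathcal G$:
\[
\mathcal G(\theta;h)=-\mathbb{E}_{a\sim p_\theta}\!\left[\left(\bar g(a)-\tfrac12\right)\nabla_\theta\log p_\theta(a)\right].
\]
This recentering is the key step, and it is the one place where a naive argument would fail. Without it, $\bar g\in(0,1)$ gives only the bound $\|\mathcal G\|\le S$, which does not vanish when the two readings agree. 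With the centered weight, the stationarity claim is immediate. If $p_\theta=q$, then $\Delta(a)=0$ for every $a$, so $\bar g(a)=1/2$ identically, the centered weight is zero, and $\mathcal G(\theta;h)=0$.

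For the quantitative bound, I would use that the logistic function is $\tfrac14$-Lipschitz, since $\sigma'(u)=\sigma(u)(1-\sigma(u))\le\tfrac14$. Applying this to $\bar g(a)=\sigma(\beta\Delta(a))$ gives
\[
\left|\bar g(a)-\tfrac12\right|=\left|\sigma(\beta\Delta(a))-\sigma(0)\right|\le\tfrac{\beta}{4}|\Delta(a)|.
\]
Next I would move the norm inside the expectation (Jensen / triangle inequality) and apply Cauchy--Schwarz in $L^2(p_\theta)$ to the product $|\bar g(a)-\tfrac12|\,\|\nabla_\theta\log p_\theta(a)\|$. This yields exactly \eqref{eq:opd_attenuation_bound}.

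The corollary follows by substituting the assumed bound $S^2$ on the score second moment. Its vanishing statement is then immediate, because the right-hand side is a constant times $\sqrt{\mathbb{E}_{p_\theta}[\Delta^2]}$.

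Finiteness of the vocabulary guarantees that every expectation in these steps is a finite sum, so all the interchanges of gradient and expectation are justified. I do not expect any other obstacle.
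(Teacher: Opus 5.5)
Your proposal is correct and follows essentially the same route as the paper's proof. Both use the score-function identity to recenter the gate at $\tfrac12$, read off stationarity from $\bar g\equiv\tfrac12$ when $p_\theta=q$, use $\sigma'\le\tfrac14$ to get $|\bar g(a)-\tfrac12|\le\tfrac{\beta}{4}|\Delta(a)|$, and finish with the triangle and Cauchy--Schwarz inequalities.
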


\begin{proof}
By the score-function identity,
$\mathbb{E}_{a\sim p_\theta}[\nabla_\theta\log p_\theta(a)]
=\sum_a\nabla_\theta p_\theta(a)=\nabla_\theta 1=0$.
Subtracting the constant $\tfrac12$ from the gate therefore leaves the
expectation unchanged:
\begin{equation}
\mathcal G(\theta;h)
=
-\mathbb{E}_{a\sim p_\theta}
\left[\left(\bar g(a)-\tfrac12\right)\nabla_\theta\log p_\theta(a)\right].
\label{eq:opd_centered}
\end{equation}
If $p_\theta=q$, then $\Delta(a)=0$ and $\bar g(a)=\tfrac12$ for every token,
proving stationarity. Since $\sigma'(u)\le\tfrac14$ and $\sigma(0)=\tfrac12$,
$|\bar g(a)-\tfrac12|\le\tfrac{\beta}{4}|\Delta(a)|$.
Applying the triangle inequality and the Cauchy--Schwarz inequality to
Eq.~(\ref{eq:opd_centered}) yields
Eq.~(\ref{eq:opd_attenuation_bound}).
\end{proof}

\begin{corollary}[Expected OPD as a saturated reverse KL]
\label{cor:opd_rkl}
Under the assumptions of Proposition~\ref{prop:opd_stationary},
\begin{equation}
\mathcal G(\theta;h)
=
-\frac12\,
\mathbb{E}_{a\sim p_\theta}
\left[
\tanh\!\left(\tfrac{\beta\Delta(a)}{2}\right)
\nabla_\theta\log p_\theta(a)
\right],
\label{eq:opd_tanh}
\end{equation}
and
\begin{equation}
\Big\|
\mathcal G(\theta;h)
-\tfrac{\beta}{4}\,\nabla_\theta D_{\mathrm{KL}}(p_\theta\Vert q)
\Big\|
\le
\frac{\beta^3}{48}\,
\mathbb{E}_{a\sim p_\theta}
\left[|\Delta(a)|^3\,\|\nabla_\theta\log p_\theta(a)\|\right].
\label{eq:opd_rkl_approx}
\end{equation}
\end{corollary}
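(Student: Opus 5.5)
The plan is to start from the centered form of $\mathcal G(\theta;h)$ obtained in the proof of Proposition~\ref{prop:opd_stationary}. There, the score-function identity lets us subtract $\tfrac12$ from the gate without changing the expectation. The first claim then reduces to the elementary identity $\sigma(u)-\tfrac12=\tfrac12\tanh(u/2)$, which I will verify directly: $\sigma(u)-\tfrac12=\frac{1-e^{-u}}{2(1+e^{-u})}$, and multiplying numerator and denominator by $e^{u/2}$ gives $\tfrac12\tanh(u/2)$. Substituting $u=\beta\Delta(a)$ into Eq.~(\ref{eq:opd_centered}) yields Eq.~(\ref{eq:opd_tanh}) immediately.

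For the approximation bound, I will rewrite the reverse-KL gradient from Eq.~(\ref{eq:rkl_grad}) as
$\tfrac{\beta}{4}\nabla_\theta D_{\mathrm{KL}}(p_\theta\Vert q)=-\tfrac12\,\mathbb{E}_{a\sim p_\theta}[\tfrac{\beta\Delta(a)}{2}\nabla_\theta\log p_\theta(a)]$. Subtracting this from Eq.~(\ref{eq:opd_tanh}) gives
\[
\mathcal G(\theta;h)-\tfrac{\beta}{4}\nabla_\theta D_{\mathrm{KL}}(p_\theta\Vert q)
=-\tfrac12\,\mathbb{E}_{a\sim p_\theta}\!\left[\Big(\tanh(x_a)-x_a\Big)\nabla_\theta\log p_\theta(a)\right],
\qquad x_a=\tfrac{\beta\Delta(a)}{2}.
\]
I will then apply the triangle inequality (Jensen for the norm) to move the norm inside the expectation. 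This reduces the claim to a scalar inequality $|\tanh x - x|\le |x|^3/3$ for all real $x$. Granting it, the right-hand side is bounded by $\tfrac12\cdot\tfrac13\cdot\tfrac{\beta^3}{8}\,\mathbb{E}[|\Delta(a)|^3\|\nabla_\theta\log p_\theta(a)\|]=\tfrac{\beta^3}{48}\,\mathbb{E}[\cdots]$, which matches Eq.~(\ref{eq:opd_rkl_approx}).

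The only nontrivial step is the scalar inequality $|\tanh x-x|\le|x|^3/3$. By oddness it suffices to take $x\ge0$. Set $f(x)=x-\tanh x$, so that $f(0)=0$ and $f'(x)=1-\operatorname{sech}^2x=\tanh^2x\ge0$, which gives $f\ge0$. Since $|\tanh x|\le|x|$, we also have $f'(x)\le x^2$, and integrating from $0$ yields $0\le f(x)\le x^3/3$. The finite-vocabulary and positivity assumptions guarantee that every expectation is a finite sum, so there are no integrability issues.
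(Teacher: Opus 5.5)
Your proposal is correct and follows the paper's proof step for step. You start from the centered form, apply $\sigma(u)-\tfrac12=\tfrac12\tanh(u/2)$ to get the first identity, subtract $\tfrac{\beta}{4}$ times the reverse-KL gradient, and finish with the triangle inequality and $|\tanh x-x|\le|x|^3/3$. The only difference is that you verify these two scalar facts explicitly, whereas the paper takes them as known.
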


\begin{proof}
Eq.~(\ref{eq:opd_tanh}) follows from Eq.~(\ref{eq:opd_centered}) and
the identity $\sigma(u)-\tfrac12=\tfrac12\tanh(u/2)$.
By Eq.~(\ref{eq:rkl_grad}),
\[
\mathcal G(\theta;h)-\tfrac{\beta}{4}\nabla_\theta D_{\mathrm{KL}}(p_\theta\Vert q)
=
-\mathbb{E}_{a\sim p_\theta}
\left[
\left(
\tfrac12\tanh\!\left(\tfrac{\beta\Delta(a)}{2}\right)-\tfrac{\beta\Delta(a)}{4}
\right)
\nabla_\theta\log p_\theta(a)
\right].
\]
Since $|\tanh(u)-u|\le|u|^3/3$, the scalar coefficient is bounded in absolute
value by $\tfrac12\cdot\tfrac13\left|\tfrac{\beta\Delta(a)}{2}\right|^3
=\tfrac{\beta^3}{48}|\Delta(a)|^3$. The triangle inequality completes the
proof.
\end{proof}

Corollary~\ref{cor:opd_rkl} clarifies what the gate does in expectation.
The expected OPD update replaces the unbounded log-ratio $\Delta(a)$ in the
reverse-KL gradient with the bounded, sign-preserving coefficient
$\tfrac12\tanh(\beta\Delta(a)/2)\in(-\tfrac12,\tfrac12)$.
When the confidence gap is small, OPD behaves as a reverse KL toward the
textual-memory teacher scaled by $\beta/4$. When the gap is large, the
influence of any single token is capped, so tokens on which the frozen teacher
is strongly over- or under-confident cannot dominate the update. This is
consistent with treating $\pi_T$ as a reference reader of the uncompressed
memory rather than as an accuracy oracle. The nonnegativity of $\bar g$ is a
property of the per-sample weights: once the zero-mean baseline $\tfrac12$ is
removed, the effective expected coefficient is signed, and the expected update
does move the student toward the teacher, with bounded strength.

\begin{proposition}[OPD does not import the teacher's own choices]
\label{prop:opd_support}
Fix a prefix $h$ and treat the logits $z\in\mathbb{R}^{|\mathcal V|}$ of
$p_\theta(\cdot\mid h,Z)=\operatorname{softmax}(z)$ as free parameters.
For tokens $a_1,\dots,a_n$ sampled at $h$, the descent direction of
$\widehat{\mathcal L}_{\mathrm{OPD}}$ with respect to $z$ is
\begin{equation}
-\nabla_z\widehat{\mathcal L}_{\mathrm{OPD}}
=\frac1n\sum_{j=1}^{n}\bar g(a_j)\,\big(e_{a_j}-p_\theta\big),
\end{equation}
where $e_a$ is the one-hot vector of token $a$. Consequently:
(i) the logit of every token outside $\{a_j\}_{j=1}^n$ decreases or stays
unchanged, regardless of the teacher $q$; and
(ii) under exact sampling from $p_\theta$, the expected change in the logit of
any token $b$ is
\begin{equation}
p_\theta(b)\Big(\bar g(b)-\mathbb{E}_{a\sim p_\theta}[\bar g(a)]\Big),
\end{equation}
whose magnitude is at most $p_\theta(b)$.
\end{proposition}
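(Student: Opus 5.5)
The approach is to differentiate the sampled surrogate directly in the logit coordinates, using Proposition~\ref{prop:opd_grad} to drop every term that does not depend on $\theta$. At the fixed prefix $h$, the surrogate restricted to the $n$ sampled positions is $\widehat{\mathcal L}_{\mathrm{OPD}}=\frac1n\sum_j \bar g(a_j)\big(\operatorname{sg}[\log q(a_j)]-\log p_\theta(a_j)\big)$. The gate and teacher term are constants in the backward pass, so Proposition~\ref{prop:opd_grad} gives $-\nabla_z\widehat{\mathcal L}_{\mathrm{OPD}}=\frac1n\sum_j\bar g(a_j)\nabla_z\log p_\theta(a_j)$. For a softmax we have $\partial\log p_\theta(a)/\partial z_b=\mathbb 1\{a=b\}-p_\theta(b)$, that is, $\nabla_z\log p_\theta(a)=e_a-p_\theta$. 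Substituting this yields the displayed descent direction. If $h$ recurs across several sampled trajectories, I will take the masked token average as the same $\frac1n$ average over the tokens drawn at that prefix; positions at other prefixes do not touch $z$, because the logits are treated as free parameters at $h$.

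For claim (i), I will read off coordinate $b\notin\{a_j\}$. Every one-hot term vanishes there, leaving $-\frac1n\sum_j\bar g(a_j)\,p_\theta(b)$. This is $\le 0$, because the sigmoid gives $\bar g\in(0,1)$ and $p_\theta(b)>0$. Since $q$ enters only through the nonnegative weights $\bar g(a_j)$, the sign does not depend on $q$, which gives the ``regardless of the teacher'' clause. Strictly speaking the change is a strict decrease, and ``or stays unchanged'' covers the degenerate case $p_\theta(b)=0$ excluded by positivity.

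For claim (ii), I will take expectations with $a_j\stackrel{\text{iid}}{\sim}p_\theta$. Coordinate $b$ of each summand is $\bar g(a_j)\mathbb 1\{a_j=b\}-\bar g(a_j)p_\theta(b)$. Its expectation is $p_\theta(b)\bar g(b)-p_\theta(b)\,\mathbb E_{a\sim p_\theta}[\bar g(a)]$, and averaging over $j$ does not change this. For the magnitude, both $\bar g(b)$ and $\mathbb E[\bar g(a)]$ lie in $(0,1)$, so their difference has absolute value below $1$, and the bound $p_\theta(b)$ follows. It is worth recording that tokens with tiny student probability, which includes the teacher's preferred tokens that the student rarely samples, get expected logit movement at most $p_\theta(b)$. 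This is what makes the bound meaningful.

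The only step that needs care is not computational. I have to state precisely what is held fixed: the gate is evaluated at the current $\theta$ and detached, and the expectation in (ii) is over sampling only, with the sampling distribution equal to the one used for the log-probabilities. This matches the assumptions of Proposition~\ref{prop:opd_stationary}. I will also note that the gate depends on $b$ only through $\Delta(b)$, which keeps the expression in (ii) well-defined.
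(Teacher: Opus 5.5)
Your proposal is correct and follows the paper's proof essentially step for step. With the gate and teacher term detached, the softmax identity $\nabla_z\log p_\theta(a)=e_a-p_\theta$ gives the descent direction; the unsampled coordinate $-\frac1n\sum_j\bar g(a_j)\,p_\theta(b)\le 0$ gives (i); and the coordinatewise expectation together with $\bar g\in(0,1)$ gives (ii) and its bound, with your extra remarks (strict decrease under positive probabilities, and the $\frac1n$ convention at the shared prefix $h$) consistent with the paper's implicit assumptions.
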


\begin{proof}
For a softmax policy, $\nabla_z\log p_\theta(a)=e_a-p_\theta$, which gives the
descent direction. For $b\notin\{a_j\}$, its $b$-th component is
$-\frac1n\sum_j\bar g(a_j)\,p_\theta(b)\le 0$, proving (i). Taking the
expectation over $a\sim p_\theta$, the $b$-th component becomes
$p_\theta(b)\bar g(b)-p_\theta(b)\,\mathbb{E}_{a\sim p_\theta}[\bar g(a)]$.
Since $\bar g\in(0,1)$, the factor in parentheses lies in $(-1,1)$, proving (ii).
\end{proof}

Proposition~\ref{prop:opd_support} separates two senses in which a student can
be ``pulled toward'' a teacher. Imitating the teacher's own choices, as in
supervised fine-tuning on teacher outputs or forward-KL distillation, follows
$-\nabla_z\,\mathrm{CE}(q,p_\theta)=q-p_\theta$. Its $b$-th component
$q(b)-p_\theta(b)$ can be large even when $p_\theta(b)\approx 0$, so tokens the
teacher prefers are promoted whether or not the student produces them. Under OPD, by contrast, tokens the student did not sample are never promoted,
and in expectation every logit change is scaled by the student's own
probability. The pull established in Corollary~\ref{cor:opd_rkl} is therefore confined to re-ranking the student's own candidates according to the teacher's relative confidence; the teacher's own choices outside the student's support are never imported. The statement holds for the logits at a fixed prefix; with shared parameters $\theta$, updates are additionally coupled across prefixes, as noted in Section~\ref{sec:Relation_KL}.

\subsection{Remarks on the Implemented Estimator}

\begin{remark}
Decomposing Eq.~(\ref{eq:opd_grad}) as
\[
\nabla_\theta\widehat{\mathcal L}_{\mathrm{OPD}}
=
-\widehat{\mathbb{E}}_{\mathcal B}
\left[\left(\bar g_{i,t}-\tfrac12\right)\nabla_\theta\ell_{\theta,i,t}\right]
-\tfrac12\,\widehat{\mathbb{E}}_{\mathcal B}
\left[\nabla_\theta\ell_{\theta,i,t}\right]
\]
separates the saturated reverse-KL term analyzed above from a baseline term.
Under exact sampling, the baseline term has zero mean and contributes only
variance. Under tempered or nucleus sampling, however, it becomes a
self-imitation term toward the truncated sampling distribution, which tends to
sharpen $p_\theta$. Our PersonaMem rollouts use temperature $1.0$ and
top-$p=0.98$, close to exact sampling. The LoCoMo rollouts use temperature
$0.8$ and top-$p=0.95$, where this effect is larger. Sample-dependent
per-sequence normalization $1/|y_i|$ further prevents exact cancellation.
\end{remark}

\begin{remark}
The results above are local, conditional statements about the expected update
at a fixed prefix. They do not guarantee that training reaches the regime
$p_\theta\approx q$, that attenuation is monotone over training, or that the
OPD coefficient $\lambda_{\mathrm{OPD}}$ need not be tuned.
\end{remark}

\section{Limitations}
\label{app:limitations}
First, the memory writer is initialized on memories extracted by a stronger
external model; although this model is not needed at inference, the quality of
the initial textual memory depends on it, and learning the writer without such
supervision is left to future work. Second, our experiments cover Qwen
backbones of up to 7B parameters; whether the same gains hold for larger
models and other model families remains to be verified.





\end{document}